\newcommand{\cN}{\mathcal{N}}
\renewcommand{\Pr}{\mathbb{P}}
\newcommand{\R}{\mathbb{R}}
\newcommand{\N}{\mathbb{N}}
\newcommand{\X}{\mathbb{X}}
\newcommand{\Y}{\mathbb{Y}}
\newcommand{\E}{\mathbb{E}}
\newcommand{\norm}[1]{\| #1 \|}
\newcommand{\cB}{\mathcal{B}}
\newcommand{\one}{\mathbf{1}}
\newcommand{\argmin}{\mathop{\mathrm{argmin}}}
\newtheorem{theorem}{Theorem}
\newtheorem{lemma}[theorem]{Lemma}
\title{Improving the Gaussian Mechanism for Differential Privacy: Analytical Calibration and Optimal Denoising\footnote{To appear at the 35th International Conference on Machine Learning (ICML), 2018}}
\date{}
\author[1]{Borja Balle\footnote{Corresponding e-mail: \url{pigem@amazon.co.uk}}}
\affil[1]{Amazon Research, Cambridge, UK}
\author[2]{Yu-Xiang Wang}
\affil[2]{Amazon Web Services, Palo Alto, USA}
\begin{document}

\maketitle

\begin{abstract}
The Gaussian mechanism is an essential building block used in multitude of differentially private data analysis algorithms. In this paper we revisit the Gaussian mechanism and show that the original analysis has several important limitations. Our analysis reveals that the variance formula for the original mechanism is far from tight in the high privacy regime ($\varepsilon \to 0$) and it cannot be extended to the low privacy regime ($\varepsilon \to \infty$). We address these limitations by developing an optimal Gaussian mechanism whose variance is calibrated directly using the Gaussian cumulative density function instead of a tail bound approximation. We also propose to equip the Gaussian mechanism with a post-processing step based on adaptive estimation techniques by leveraging that the distribution of the perturbation is known. Our experiments show that analytical calibration removes at least a third of the variance of the noise compared to the classical Gaussian mechanism, and that denoising dramatically improves the accuracy of the Gaussian mechanism in the high-dimensional regime.
\end{abstract}

\section{Introduction}

Output perturbation is a cornerstone of mechanism design in differential privacy (DP). Well-known mechanisms in this class are the Laplace and Gaussian mechanisms \cite{dwork2006calibrating,dwork2014algorithmic}. More complex mechanisms are often obtained by composing multiple applications of these basic output perturbation mechanisms. For example, the Laplace mechanism is the basic building block of the sparse vector mechanism \cite{dwork2009complexity}, and the Gaussian mechanism is the building block of private empirical risk minimization algorithms based on stochastic gradient descent \cite{bassily2014private}. Analysing the privacy of such complex mechanisms turns out to be a delicate and error-prone task \cite{lyu2017understanding}. In particular, obtaining tight privacy analyses leading to optimal utility is one of the main challenges in the design of advanced DP mechanisms. An alternative to tight \emph{a-priori} analyses is to equip complex mechanisms with \emph{algorithmic} noise calibration and accounting methods. These methods use numerical computations to, e.g.\ calibrate perturbations and compute cumulative privacy losses at run time, without relying on hand-crafted worst-case bounds. For example, recent works have proposed methods to account for the privacy loss under compositions occurring in complex mechanisms \cite{rogers2016privacy,abadi2016deep}.

In this work we revisit the Gaussian mechanism and develop two ideas to improve the utility of output perturbation DP mechanisms based on Gaussian noise. The first improvement is an algorithmic noise calibration strategy that uses numerical evaluations of the Gaussian cumulative density function (CDF) to obtain the optimal variance to achieve DP using Gaussian perturbation. The analysis and the resulting algorithm are provided in Section~\ref{sec:aGM}.
In order to motivate the need for a numerical approach to calibrate the noise of a DP Gaussian perturbation mechanism, we start with an analysis of the main limitations of the classical Gaussian mechanism in Section~\ref{sec:limitations}.
A numerical evaluation provided in Section~\ref{sec:expaGM} showcases the advantages of our optimal calibration procedure.

The second improvement equips the Gaussian perturbation mechanism with a post-processing step which denoises the output using adaptive estimation techniques from the statistics literature. Since DP is preserved by post-processing and the distribution of the perturbation added to the desired outcome is known, this allows a mechanism to achieve the desired privacy guarantee while increasing the accuracy of the released value. The relevant denoising estimators and their utility guarantees are discussed in Section~\ref{sec:denoise}. Results presented in this section are not new: they are the product of a century's worth of research in statistical estimation. Our contribution is to compile relevant results scattered throughout the literature in a single place and showcase their practical impact in synthetic (Section~\ref{sec:expmean}) and real (Section~\ref{sec:exptaxi}) datasets, thus providing useful pointers and guidelines for practitioners.

\section{Limitations of the Classical Gaussian Mechanism}\label{sec:limitations}

Let $\X$ be an input space equipped with a symmetric neighbouring relation $x \simeq x'$. Let $\varepsilon \geq 0$ and $\delta \in [0,1]$ be two privacy parameters. A $\Y$-valued randomized algorithm $M : \X \to \Y$ is $(\varepsilon,\delta)$-DP \cite{dwork2006calibrating} if for every pair of neighbouring inputs $x \simeq x'$ and every possible (measurable) output set $E \subseteq \Y$ the following inequality holds:
\begin{align}\label{eqn:DP}
\Pr[M(x) \in E] \leq e^{\varepsilon} \Pr[M(x') \in E] + \delta \enspace.
\end{align}
The definition of DP captures the intuition that a computation on private data will not reveal sensitive information about individuals in a dataset if removing or replacing an individual in the dataset has a negligible effect in the output distribution.

In this paper we focus on the family of so-called output perturbation DP mechanisms. 
An output perturbation mechanism $M$ for a deterministic vector-valued computation $f : \X \to \R^d$ is obtained by computing the function $f$ on the input data $x$ and then adding random noise sampled from a random variable $Z$ to the output.
The amount of noise required to ensure the mechanism $M(x) = f(x) + Z$ satisfies a given privacy guarantee typically depends on how sensitive the function $f$ is to changes in the input and the specific distribution chosen for $Z$.
The Gaussian mechanism gives a way to calibrate a zero mean isotropic Gaussian perturbation $Z \sim \cN(0,\sigma^2 I)$ to the global $L_2$ sensitivity $\Delta = \sup_{x \simeq x'} \norm{f(x) - f(x')}$ of $f$ as follows.

\begin{theorem}[Classical Gaussian Mechanism]\label{thm:cGM}
For any $\varepsilon, \delta \in (0,1)$, the Gaussian output perturbation mechanism with $\sigma = \Delta \sqrt{2 \log(1.25/\delta)}/\varepsilon$ is $(\varepsilon,\delta)$-DP.
\end{theorem}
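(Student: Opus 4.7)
The plan is to reduce to a one-dimensional computation and then bound the tail of the Gaussian-distributed privacy loss random variable. First I would fix a pair of neighbouring inputs $x \simeq x'$ and let $v = f(x) - f(x') \in \R^d$, so that $\|v\| \le \Delta$. Because $Z \sim \cN(0,\sigma^2 I)$ is isotropic, the pair of distributions $(M(x), M(x'))$ is, after an orthogonal change of coordinates, equivalent to $(\cN(\|v\| e_1, \sigma^2 I), \cN(0, \sigma^2 I))$. Since $(\varepsilon,\delta)$-DP depends only on the two output distributions, I may therefore assume $d=1$ and compare $\cN(\mu,\sigma^2)$ against $\cN(0,\sigma^2)$ with $\mu = \|v\| \le \Delta$.

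Next I would recall the standard reduction: a mechanism satisfies $(\varepsilon,\delta)$-DP for a neighbouring pair $(x,x')$ iff the privacy loss random variable $L = \log(p_{M(x)}(O)/p_{M(x')}(O))$, with $O \sim M(x)$, satisfies a ``one-sided tail'' condition that is implied by $\Pr[L > \varepsilon] \le \delta$ (this follows by splitting the output set $E$ into the regions where the likelihood ratio exceeds or falls below $e^\varepsilon$). For the univariate Gaussian comparison above, a direct computation gives
\begin{equation*}
L = \frac{\mu}{\sigma^2}\left(O - \frac{\mu}{2}\right), \qquad O \sim \cN(\mu,\sigma^2),
\end{equation*}
so that $L \sim \cN\!\left(\tfrac{\mu^2}{2\sigma^2},\tfrac{\mu^2}{\sigma^2}\right)$. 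The event $\{L > \varepsilon\}$ therefore reduces to a standard Gaussian tail probability.

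The remaining step is quantitative: I must show that $\sigma = \Delta\sqrt{2\log(1.25/\delta)}/\varepsilon$ is enough to force $\Pr[L > \varepsilon] \le \delta$ for every $\mu \le \Delta$. Writing $\Pr[L > \varepsilon] = \Pr\!\left[\cN(0,1) > \tfrac{\sigma \varepsilon}{\mu} - \tfrac{\mu}{2\sigma}\right]$, the hard part is picking the right Gaussian tail bound: Theorem~\ref{thm:cGM} is proved using the loose inequality $\Pr[\cN(0,1) > t] \le \tfrac{1}{\sqrt{2\pi}\,t}\, e^{-t^2/2}$, together with the convenient overestimate $1/(t\sqrt{2\pi}) \le 1.25$ valid for moderate $t$; solving the resulting exponential inequality for $\sigma$ yields the stated constant. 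This is precisely the approximation that the paper identifies as suboptimal in the $\varepsilon \to 0$ regime and inapplicable when $\varepsilon > 1$, motivating the analytical calibration to follow.

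Finally, the main obstacle is really bookkeeping rather than mathematics: one must verify that the worst-case over neighbours is attained at $\mu = \Delta$ (the tail probability is monotone in $\mu$ over the range of interest, since increasing the means' separation increases the privacy loss distribution's mean and variance symmetrically), and one must be careful that the crude tail bound is applied only in the regime of $t$ where $1/(t\sqrt{2\pi}) \le 1.25$ holds, which corresponds exactly to the restriction $\varepsilon,\delta \in (0,1)$ stated in the theorem.
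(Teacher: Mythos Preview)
The paper does not actually prove Theorem~\ref{thm:cGM}: it is quoted as the known classical result (from \cite{dwork2014algorithmic}), and no proof of it appears in Appendix~\ref{app:proofs}. What the paper \emph{does} do, in Section~\ref{sec:prooflimitations}, is lay out the ingredients of the standard argument --- the sufficient condition \eqref{eqn:suff} on the privacy loss together with Lemma~\ref{lem:privlossGM} identifying that loss as $\cN(\eta,2\eta)$ --- precisely in order to point out its slack. Your proposal reconstructs exactly that standard argument: reduce to one dimension by rotational invariance, compute the privacy loss as a Gaussian, invoke the sufficient tail condition $\Pr[L>\varepsilon]\le\delta$, and close with a Mill's-ratio-type bound. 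This is correct and is the proof the paper has in mind when it speaks of ``the usual analysis of the Gaussian mechanism.''

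The only place where your sketch is loose is the final step extracting the constant $1.25$ and the assertion that the restriction $\varepsilon,\delta\in(0,1)$ corresponds ``exactly'' to the regime where $1/(t\sqrt{2\pi})\le 1.25$. The actual derivation in \cite{dwork2014algorithmic} is a bit more fiddly than a single application of that inequality (one also uses $\varepsilon<1$ to drop a lower-order term when solving for $\sigma$), so ``exactly'' overstates things; but the shape of the argument is right, and since the paper itself does not carry out these details there is nothing further to compare against.
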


A natural question one can ask about this result is whether this value of $\sigma$ provides the minimal amount of noise required to obtain $(\varepsilon,\delta)$-DP with Gaussian perturbations. Another natural question is what happens in the case $\varepsilon \geq 1$. This section addresses both these questions. First we show that the value of $\sigma$ given in Theorem~\ref{thm:cGM} is suboptimal in the high privacy regime $\varepsilon \to 0$. Then we show that this problem is in fact inherent to the usual proof strategy used to analyze the Gaussian mechanism. We conclude the section by showing that for large values of $\varepsilon$ the standard deviation of a Gaussian perturbation that provides $(\varepsilon,\delta$)-DP must scale like $\Omega(1/\sqrt{\varepsilon})$. This implies that the scaling $\Theta(1/\varepsilon)$ provided by the classical Gaussian mechanism in the range $\varepsilon \in (0,1)$ cannot be extended beyond any bounded interval.

\subsection{Limitations in the High Privacy Regime}\label{sec:highprivacy}

To illustrate the sub-optimality of the classical Gaussian mechanism in the regime $\varepsilon \to 0$ we start by showing it is possible to achieve $(0,\delta)$-DP using Gaussian perturbations. This clearly falls outside the capabilities of the classical Gaussian mechanism, since the standard deviation $\sigma = \Theta(1/\varepsilon)$ provided by Theorem~\ref{thm:cGM} grows to infinity as $\varepsilon \to 0$.

\begin{theorem}\label{thm:TV}
A Gaussian output perturbation mechanism with $\sigma = \Delta / 2 \delta$ is $(0,\delta)$-DP\footnote{Proofs for all results given in the paper are presented in Appendix~\ref{app:proofs}.}.
\end{theorem}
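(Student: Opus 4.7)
The plan is to recast $(0,\delta)$-DP as a total variation bound. Because the neighbouring relation is symmetric, reading \eqref{eqn:DP} with $\varepsilon = 0$ in both directions and letting $E$ range over all measurable sets shows that $M$ is $(0,\delta)$-DP if and only if $\mathrm{TV}(M(x), M(x')) \leq \delta$ for every pair $x \simeq x'$. So the task reduces to bounding $\mathrm{TV}(\cN(f(x), \sigma^2 I), \cN(f(x'), \sigma^2 I))$ by $\delta$ whenever $x \simeq x'$.

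Next I would reduce to one dimension. Set $a = \norm{f(x) - f(x')} \leq \Delta$. By rotational invariance of the isotropic Gaussian, one may assume the two means differ only along the first coordinate axis, in which case the likelihood ratio depends only on that coordinate. The data-processing inequality for TV then becomes an equality under the projection onto this sufficient statistic, giving
\[ \mathrm{TV}(\cN(f(x), \sigma^2 I), \cN(f(x'), \sigma^2 I)) = \mathrm{TV}(\cN(0, \sigma^2), \cN(a, \sigma^2)). \]
The optimal test in one dimension is a threshold at $a/2$, yielding the closed form $2\Phi(a/(2\sigma)) - 1$, where $\Phi$ denotes the standard normal CDF.

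Finally, substitute $\sigma = \Delta/(2\delta)$. Since $a \leq \Delta$, we get $a/(2\sigma) \leq \delta$, so it remains to check the scalar inequality $2\Phi(\delta) - 1 \leq \delta$. Concavity of $\Phi$ on $[0,\infty)$ yields the tangent-line bound $\Phi(t) \leq 1/2 + t/\sqrt{2\pi}$ for $t \geq 0$, whence $2\Phi(\delta) - 1 \leq 2\delta/\sqrt{2\pi} < \delta$, completing the proof.

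I do not anticipate a serious obstacle here: the computation is elementary once total variation is identified as the right quantity. The mild subtlety is performing the reduction to one dimension as an \emph{equality} (via sufficiency of the projection onto the line connecting the means) rather than a loose inequality, so that no suboptimal factor creeps in when comparing $2\Phi(\delta) - 1$ to $\delta$ at the end.
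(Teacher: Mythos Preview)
Your proof is correct but takes a different route from the paper. Both arguments begin by identifying $(0,\delta)$-DP with a total variation bound, but from there the paper simply invokes Pinsker's inequality,
\[
\mathrm{TV}(M(x),M(x')) \leq \sqrt{\tfrac{1}{2}\,\mathrm{KL}(\cN(f(x),\sigma^2 I)\,\|\,\cN(f(x'),\sigma^2 I))} = \frac{\norm{f(x)-f(x')}}{2\sigma} \leq \frac{\Delta}{2\sigma},
\]
which becomes exactly $\delta$ when $\sigma=\Delta/2\delta$. Your approach instead reduces to one dimension via sufficiency, writes the TV distance in closed form as $2\Phi(a/2\sigma)-1$, and then bounds $\Phi$ by its tangent line at $0$. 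The paper's route is shorter and avoids the dimension reduction and the final scalar inequality. Your route is more informative: it gives the exact TV rather than the Pinsker upper bound, and in fact yields the slightly stronger conclusion $\mathrm{TV}\leq \sqrt{2/\pi}\,\delta<\delta$. It is also closer in spirit to the exact $\Phi$-based analysis the paper develops in Section~\ref{sec:aGM}.
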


Previous analyses of the Gaussian mechanism are based on a simple sufficient condition for DP in terms of the privacy loss random variable \cite{dwork2014algorithmic}.
The next section explains why the usual analysis of the Gaussian mechanism cannot yield tight bounds for the regime $\varepsilon \to 0$. 
This shows that our example is not a corner case, but a fundamental limitation of trying to establish $(\varepsilon,\delta)$-DP through said sufficient condition.

\subsection{Limitations of Privacy Loss Analyses}\label{sec:prooflimitations}

Given a vector-valued mechanism $M$ let $p_{M(x)}(y)$ denote the density of the random variable $Y = M(x)$. The privacy loss function of $M$ on a pair of neighbouring inputs $x \simeq x'$ is defined as
\begin{align*}
\ell_{M,x,x'}(y) = \log \left(\frac{p_{M(x)}(y)}{p_{M(x')}(y)}\right) \enspace.
\end{align*}
The privacy loss random variable $L_{M,x,x'} = \ell_{M,x,x'}(Y)$ is the transformation of the output random variable $Y = M(x)$ by the function $\ell_{M,x,x'}$. For the particular case of a Gaussian mechanism $M(x) = f(x) + Z$ with $Z \sim \cN(0,\sigma^2 I)$ it is well-known that the privacy loss random variable is also Gaussian \cite{dwork2016concentrated}.

\begin{lemma}\label{lem:privlossGM}
The privacy loss $L_{M,x,x'}$ of a Gaussian output perturbation mechanism follows a distribution $\cN(\eta, 2 \eta)$ with
$\eta = D^2 / 2 \sigma^2$, where $D = \norm{f(x) - f(x')}$.
\end{lemma}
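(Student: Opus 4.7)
The plan is to carry out a direct computation of the log-density ratio using the explicit form of the multivariate Gaussian density, and then identify the resulting random variable as a Gaussian by reading off its mean and variance.

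First I would write $p_{M(x)}(y) = (2\pi\sigma^2)^{-d/2}\exp(-\|y-f(x)\|^2/(2\sigma^2))$ and similarly for $p_{M(x')}(y)$. Taking logs and subtracting, the normalising constants cancel and one obtains
\begin{equation*}
\ell_{M,x,x'}(y) \;=\; \frac{1}{2\sigma^2}\bigl(\|y-f(x')\|^2 - \|y-f(x)\|^2\bigr).
\end{equation*}
To evaluate this cleanly at $Y = M(x) = f(x) + Z$, I would substitute $y = f(x) + z$ and expand the squared norms; the term $\|z\|^2$ cancels, leaving
\begin{equation*}
\ell_{M,x,x'}(f(x)+z) \;=\; \frac{\|f(x)-f(x')\|^2 + 2\langle f(x)-f(x'),\, z\rangle}{2\sigma^2} \;=\; \frac{D^2}{2\sigma^2} + \frac{\langle f(x)-f(x'),\, z\rangle}{\sigma^2}.
\end{equation*}

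Next I would observe that for $Z \sim \mathcal{N}(0,\sigma^2 I)$, the linear functional $\langle f(x)-f(x'),\, Z\rangle$ is a centred Gaussian with variance $\sigma^2 \|f(x)-f(x')\|^2 = \sigma^2 D^2$. Consequently $L_{M,x,x'}$ is a deterministic shift plus a Gaussian multiple of a scalar Gaussian, hence itself Gaussian, with mean $D^2/(2\sigma^2) = \eta$ and variance $\sigma^2 D^2 / \sigma^4 = D^2/\sigma^2 = 2\eta$. This yields the claimed distribution $\mathcal{N}(\eta, 2\eta)$.

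There is no real obstacle here beyond bookkeeping: the calculation is a clean algebraic expansion, and the only step requiring any care is the cancellation of $\|z\|^2$ that makes the log-ratio affine in $z$, so the result is Gaussian rather than $\chi^2$-like. I would also briefly note that the answer depends on the pair $(x,x')$ only through the scalar $D = \|f(x)-f(x')\|$, which is exactly what one expects given the rotational symmetry of the isotropic Gaussian noise.
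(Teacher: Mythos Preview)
Your proposal is correct and follows essentially the same approach as the paper: compute the log-density ratio, observe it is affine in $y$ (equivalently in $z$), and read off the Gaussian mean and variance of the resulting linear functional of $Z$. The only cosmetic difference is that you substitute $y = f(x)+z$ before expanding the squared norms, whereas the paper expands first and then plugs in $Y = f(x)+Z$; the computations are otherwise identical.
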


The privacy analysis of the classical Gaussian mechanism relies on the following sufficient condition: a mechanism $M$ is $(\varepsilon,\delta)$-DP if the privacy loss $L_{M,x,x'}$ satisfies
\begin{align}\label{eqn:suff}
\forall x \simeq x' : \Pr[L_{M,x,x'} \geq \varepsilon] \leq \delta \enspace.
\end{align}
Since Lemma~\ref{lem:privlossGM} shows the privacy loss $L_{M,x,x'}$ of the Gaussian mechanism is a Gaussian random variable with mean $\norm{f(x) - f(x')}^2 / 2 \sigma^2$, we have $\Pr[L_{M,x,x'} > 0] \geq 1/2$ for any pair of datasets with $f(x) \neq f(x')$. This observation shows that in general it is not possible to use this sufficient condition for $(\varepsilon,\delta)$-DP to prove that the Gaussian mechanism achieves $(0,\delta)$-DP for any $\delta < 1/2$. In other words, the sufficient condition is not necessary in the regime $\varepsilon \to 0$. We conclude that an alternative analysis is required in order to improve the dependence on $\varepsilon$ in the Gaussian mechanism.

\subsection{Limitations in the Low Privacy Regime}\label{sec:lowprivacy}

The last question we address in this section is whether the order of magnitude $\sigma = \Theta(1/\varepsilon)$ given by Theorem~\ref{thm:cGM} for $\varepsilon \leq 1$ can be extended to privacy parameters of the form $\varepsilon > 1$. We show this is not the case by providing the following lower bound.

\begin{theorem}\label{thm:lowerb}
Let $f : \X \to \R^d$ have global $L_2$ sensitivity $\Delta$. Suppose $\varepsilon > 0$ and $0 < \delta < 1/2 - e^{-3 \varepsilon}/\sqrt{4 \pi \varepsilon}$. If the mechanism $M(x) = f(x) + Z$ with $Z \sim \cN(0,\sigma^2 I)$ is $(\varepsilon,\delta)$-DP, then $\sigma \geq \Delta / \sqrt{2 \varepsilon}$.
\end{theorem}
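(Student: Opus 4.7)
The approach is to use the tight characterization of $(\varepsilon,\delta)$-DP as the condition $H_\varepsilon(M(x),M(x')) \le \delta$ for every pair of neighbouring inputs, where $H_\varepsilon(P,Q) := \sup_E\{P(E) - e^\varepsilon Q(E)\}$ is the hockey-stick divergence. The plan is to exhibit a single pair $x \simeq x'$ attaining $\norm{f(x) - f(x')} = \Delta$ where, under the hypothesis $\sigma < \Delta/\sqrt{2\varepsilon}$, this divergence already exceeds $1/2 - e^{-3\varepsilon}/\sqrt{4\pi\varepsilon}$, thereby contradicting the upper bound on $\delta$.

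\emph{Step 1 (Reduction to one dimension).} Because $\Delta$ is defined as a supremum, we may choose neighbours with $\norm{f(x) - f(x')} = \Delta$ (or approach this by a limiting argument). Since $Z \sim \cN(0,\sigma^2 I)$ is isotropic, a rotation aligns $f(x') - f(x)$ with a coordinate axis and the DP inequality reduces to a one-dimensional comparison between $P = \cN(0,\sigma^2)$ and $Q = \cN(\Delta,\sigma^2)$ on $\R$.

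\emph{Step 2 (Closed form for $H_\varepsilon$).} A Neyman--Pearson argument identifies the set that attains the supremum as the half-line $E^\star = (-\infty,t^\star]$ with $t^\star$ solving $\log(p/q)(y) = \varepsilon$, namely $t^\star = \Delta/2 - \sigma^2\varepsilon/\Delta$. Introducing $a = \Delta/(2\sigma)$ and $b = \sigma\varepsilon/\Delta$ (so $ab = \varepsilon/2$), direct integration gives
\[
H_\varepsilon(P,Q) \;=\; \Phi(a-b) \;-\; e^\varepsilon\,\Phi(-a-b).
\]

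\emph{Step 3 (Using the hypothesis on $\sigma$).} The bound $\sigma < \Delta/\sqrt{2\varepsilon}$ is equivalent to $a > \sqrt{\varepsilon/2}$, and the identity $ab = \varepsilon/2$ then forces $b < \sqrt{\varepsilon/2} < a$. Consequently $\Phi(a-b) > 1/2$, while AM--GM yields $a+b \ge 2\sqrt{ab} = \sqrt{2\varepsilon}$ and therefore $\Phi(-a-b) \le \Phi(-\sqrt{2\varepsilon})$.

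\emph{Step 4 (Gaussian tail estimate and contradiction).} A sharp enough Mill's-ratio bound on $\Phi(-\sqrt{2\varepsilon})$ gives $e^\varepsilon \Phi(-\sqrt{2\varepsilon}) \le e^{-3\varepsilon}/\sqrt{4\pi\varepsilon}$. Combining with the previous steps, $\delta \ge H_\varepsilon > 1/2 - e^{-3\varepsilon}/\sqrt{4\pi\varepsilon}$, contradicting the hypothesis and proving $\sigma \ge \Delta/\sqrt{2\varepsilon}$.

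The main obstacle is the tail estimate in Step 4: the textbook inequality $\Phi(-x) \le e^{-x^2/2}/(x\sqrt{2\pi})$ alone produces a prefactor of $1/\sqrt{4\pi\varepsilon}$ after multiplying by $e^\varepsilon$, so pinning down the stated constant $e^{-3\varepsilon}$ requires either a refined tail inequality tailored to moderately large $\varepsilon$, or a more careful exploitation of the strict inequality $a > \sqrt{\varepsilon/2}$ in Step~3 (which gives $\Phi(a-b) > 1/2$ by a margin depending on how far $\sigma$ is from the critical value $\Delta/\sqrt{2\varepsilon}$) to absorb the missing exponential factor.
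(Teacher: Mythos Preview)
Your approach is essentially the paper's own: the paper also reduces to the tight characterization (its Theorem~\ref{thm:necesuff}/Theorem~\ref{thm:aGMDP}), evaluates the left-hand side at the critical value $\sigma=\Delta/\sqrt{2\varepsilon}$ to get $\delta_0(\varepsilon)=\Phi(0)-e^{\varepsilon}\Phi(-\sqrt{2\varepsilon})$, uses monotonicity so that any smaller $\sigma$ forces $\delta>\delta_0(\varepsilon)$, and then applies Gordon's Mill's-ratio inequality $r(t)<1/t$ to lower-bound $\delta_0(\varepsilon)$. Your Steps~1--3 reproduce this via the hockey-stick formulation and an AM--GM bound on $a+b$, which is a cosmetic variant of the same argument.

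Your worry in Step~4 is not a gap in \emph{your} reasoning; it is a discrepancy in the paper itself. The paper writes
\[
e^{\varepsilon}\Phi^{c}(\sqrt{2\varepsilon})=\frac{e^{-3\varepsilon}}{\sqrt{2\pi}}\,r(\sqrt{2\varepsilon})<\frac{e^{-3\varepsilon}}{\sqrt{4\pi\varepsilon}},
\]
but from $\Phi^{c}(t)=\frac{1}{\sqrt{2\pi}}e^{-t^{2}/2}r(t)$ with $t=\sqrt{2\varepsilon}$ (so $t^{2}/2=\varepsilon$) one actually gets
\[
e^{\varepsilon}\Phi^{c}(\sqrt{2\varepsilon})=\frac{1}{\sqrt{2\pi}}\,r(\sqrt{2\varepsilon})<\frac{1}{\sqrt{4\pi\varepsilon}}.
\]
No refinement of Mill's ratio produces an additional $e^{-3\varepsilon}$ factor here (such refinements are only polynomial in $t$). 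So Gordon's inequality, as used both by you and by the paper, yields the conclusion only under the more restrictive hypothesis $\delta<1/2-1/\sqrt{4\pi\varepsilon}$; the $e^{-3\varepsilon}$ in the theorem statement appears to stem from this arithmetic slip rather than from an idea you are missing. Your proposed fixes (sharper tail bound, or exploiting the strict margin in $\Phi(a-b)>1/2$) will not manufacture the missing exponential factor either.
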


Note that as $\varepsilon \to \infty$ the upper bound on $\delta$ in Theorem~\ref{thm:lowerb} converges to $1/2$. Thus, as $\varepsilon$ increases the range of $\delta$'s requiring noise of the order $\Omega(1/\sqrt{\varepsilon})$ increases to include all parameters of practical interest. This shows that the rate $\sigma = \Theta(1/\varepsilon)$ provided by the classical Gaussian mechanism cannot be extended beyond the interval $\varepsilon \in (0,1)$. Note this provides an interesting contrast with the Laplace mechanism, which can achieve $\varepsilon$-DP with standard deviation $\Theta(1/\varepsilon)$ in the low privacy regime.

\section{The Analytic Gaussian Mechanism}\label{sec:aGM}

The limitations of the classical Gaussian mechanism described in the previous section suggest there is room for improvement in the calibration of the variance of a Gaussian perturbation to the corresponding global $L_2$ sensitivity.
Here we present a method for optimal noise calibration for Gaussian perturbations that we call \emph{analytic Gaussian mechanism}.
To do so we must address the two sources of slack in the classical analysis: the sufficient condition \eqref{eqn:suff} used to reduce the analysis to finding an upper bound for $\Pr[\cN(\eta,2\eta) > \varepsilon]$, and the use of a Gaussian tail approximation to obtain such upper bound.
We address the first source of slack by showing that the sufficient condition in terms of the privacy loss random variable comes from a relaxation of a necessary and sufficient condition involving two privacy loss random variables. When specialized to the Gaussian mechanism, this condition involves probabilities about Gaussian random variables, which instead of approximating by a tail bound we represent explicitly in terms of the CDF of the standard univariate Gaussian distribution:
\begin{align*}
\Phi(t) = \Pr[\cN(0,1) \leq t] = \frac{1}{\sqrt{2 \pi}} \int_{-\infty}^{t} e^{-y^2/2} dy \enspace.
\end{align*}
Using this point of view, we introduce a calibration strategy for Gaussian perturbations that requires solving a simple optimization problem involving $\Phi(t)$. We discuss how to solve this optimization at the end of this section.

The first step in our analysis is to provide a necessary and sufficient condition for differential privacy in terms of privacy loss random variables. This is captured by the following result.

\begin{theorem}\label{thm:necesuff}
A mechanism $M : \X \to \Y$ is $(\varepsilon,\delta)$-DP if and only if the following holds for every $x \simeq x'$:
\begin{align}\label{eqn:necesuff}
\Pr[L_{M,x,x'} \geq \varepsilon] - e^{\varepsilon} \Pr[L_{M,x',x} \leq - \varepsilon] \leq \delta \enspace.
\end{align}
\end{theorem}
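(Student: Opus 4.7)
The plan is to show that both directions of the equivalence reduce to identifying the set $E^\star = \{y : p_{M(x)}(y) \ge e^{\varepsilon} p_{M(x')}(y)\}$ as the worst-case event for the DP inequality, and then rewriting integrals over $E^\star$ in terms of the two privacy loss random variables.

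First I would rephrase $(\varepsilon,\delta)$-DP as the statement that for every $x \simeq x'$,
\[
\sup_{E} \bigl(\Pr[M(x)\in E] - e^{\varepsilon}\Pr[M(x')\in E]\bigr) \le \delta,
\]
and note that the supremum is attained at $E^\star$, because the integrand $p_{M(x)}(y) - e^{\varepsilon} p_{M(x')}(y)$ is nonnegative exactly on $E^\star$. Thus $(\varepsilon,\delta)$-DP at the pair $(x,x')$ is equivalent to
\[
\int_{E^\star}\! p_{M(x)}(y)\,dy - e^{\varepsilon}\!\int_{E^\star}\! p_{M(x')}(y)\,dy \le \delta.
\]

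Next I would translate each of these two integrals into a probability about a privacy loss. The set $E^\star$ is, by definition, $\{y : \ell_{M,x,x'}(y) \ge \varepsilon\}$, so the first integral is $\Pr[L_{M,x,x'} \ge \varepsilon]$. For the second integral I would use the symmetry $\ell_{M,x',x}(y) = -\ell_{M,x,x'}(y)$ to rewrite
\[
\int_{E^\star} p_{M(x')}(y)\,dy = \Pr_{Y\sim M(x')}\!\bigl[\ell_{M,x,x'}(Y)\ge \varepsilon\bigr] = \Pr[L_{M,x',x}\le -\varepsilon].
\]
Substituting these two identities into the previous display yields exactly \eqref{eqn:necesuff}, and since each step was an equivalence at the level of a fixed pair $(x,x')$, quantifying over $x\simeq x'$ gives both directions simultaneously.

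I do not expect any serious obstacle: the only technical point to be careful about is handling the densities correctly when $p_{M(x)}$ and $p_{M(x')}$ are not mutually absolutely continuous, which can be dispatched by working with a common dominating measure (or Radon–Nikodym derivatives against $p_{M(x)}+p_{M(x')}$) and adopting the convention that $\ell_{M,x,x'}=+\infty$ where the denominator vanishes. With that convention the set $E^\star$ still separates the positive and negative parts of $p_{M(x)} - e^{\varepsilon} p_{M(x')}$, so the argument goes through verbatim.
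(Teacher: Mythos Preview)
Your proposal is correct and follows essentially the same route as the paper: identify the worst-case event $E^\star=\{y:p_{M(x)}(y)\ge e^{\varepsilon}p_{M(x')}(y)\}$, reduce $(\varepsilon,\delta)$-DP at a fixed pair to $\int_{E^\star}(p-e^{\varepsilon}p')\le\delta$, and then recognize the two integrals as $\Pr[L_{M,x,x'}\ge\varepsilon]$ and $\Pr[L_{M,x',x}\le-\varepsilon]$. Your remark about a common dominating measure and the convention $\ell=+\infty$ on the singular part is a welcome refinement that the paper omits, but it does not change the underlying argument.
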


Note that Theorem~\ref{thm:necesuff} immediately implies the sufficient condition given in \eqref{eqn:suff} through the inequality
\begin{align*}
\Pr[L_{M,x,x'} \geq \varepsilon] - e^{\varepsilon} \Pr[L_{M,x',x} \leq - \varepsilon] \leq
\Pr[L_{M,x,x'} \geq \varepsilon] \enspace.
\end{align*}
Now we can use Lemma~\ref{lem:privlossGM} to specialize \eqref{eqn:necesuff} for a Gaussian output perturbation mechanism. The relevant computations are packaged in the following result, where we express the probabilities in \eqref{eqn:necesuff} in terms of the Gaussian CDF $\Phi$.

\begin{lemma}\label{lem:LtoCDF}
Suppose $M(x) = f(x) + Z$ is a Gaussian output perturbation mechanism with $Z \sim \cN(0,\sigma^2 I)$. For any $x \simeq x'$ let $D = \norm{f(x) - f(x')}$. Then the following hold for any $\varepsilon \geq 0$:
\begin{align}
\Pr[L_{M,x,x'} \geq \varepsilon] &=
\Phi\left(\frac{D}{2 \sigma} - \frac{ \varepsilon \sigma}{D}\right) \label{eqn:L1} \enspace, \\
\Pr[L_{M,x',x} \leq - \varepsilon] &= \Phi\left(- \frac{D}{2 \sigma} - \frac{ \varepsilon \sigma}{D}\right) \label{eqn:L2} \enspace.
\end{align}
\end{lemma}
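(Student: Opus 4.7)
The plan is to reduce both probabilities to standard Gaussian tail probabilities by applying Lemma~\ref{lem:privlossGM} and then standardizing. Since the statement concerns two privacy loss random variables (with swapped roles of $x$ and $x'$), I first want to observe that $D = \norm{f(x) - f(x')} = \norm{f(x') - f(x)}$, so Lemma~\ref{lem:privlossGM} gives the \emph{same} parameter $\eta = D^2/(2\sigma^2)$ for both $L_{M,x,x'}$ and $L_{M,x',x}$; i.e., both privacy loss variables follow $\cN(\eta, 2\eta)$.

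For \eqref{eqn:L1}, I would write $L_{M,x,x'} = \eta + \sqrt{2\eta}\, U$ where $U \sim \cN(0,1)$, so that
\begin{equation*}
\Pr[L_{M,x,x'} \geq \varepsilon] \;=\; \Pr\!\left[U \geq \tfrac{\varepsilon - \eta}{\sqrt{2\eta}}\right] \;=\; \Phi\!\left(\tfrac{\eta - \varepsilon}{\sqrt{2\eta}}\right),
\end{equation*}
using $1 - \Phi(t) = \Phi(-t)$. Substituting $\eta = D^2/(2\sigma^2)$ and $\sqrt{2\eta} = D/\sigma$ gives
\begin{equation*}
\tfrac{\eta - \varepsilon}{\sqrt{2\eta}} \;=\; \tfrac{D^2/(2\sigma^2) - \varepsilon}{D/\sigma} \;=\; \tfrac{D}{2\sigma} - \tfrac{\varepsilon \sigma}{D},
\end{equation*}
which matches \eqref{eqn:L1}.

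For \eqref{eqn:L2} I proceed identically with $L_{M,x',x} = \eta + \sqrt{2\eta}\, U'$, obtaining
\begin{equation*}
\Pr[L_{M,x',x} \leq -\varepsilon] \;=\; \Phi\!\left(\tfrac{-\varepsilon - \eta}{\sqrt{2\eta}}\right) \;=\; \Phi\!\left(-\tfrac{D}{2\sigma} - \tfrac{\varepsilon\sigma}{D}\right),
\end{equation*}
after the same substitution. There is no real obstacle here: the proof is essentially bookkeeping, and the only point worth being careful about is using Lemma~\ref{lem:privlossGM} for both orderings of the pair (which is legitimate because $D$ is symmetric in $x, x'$) and correctly tracking the algebraic simplification of $(\eta \pm \varepsilon)/\sqrt{2\eta}$. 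The case $D = 0$ can be handled separately (both probabilities are trivially $0$ for $\varepsilon > 0$, and the right-hand sides should be interpreted by taking $D \to 0^+$), or excluded since then $x$ and $x'$ are indistinguishable through $f$ and the condition of Theorem~\ref{thm:necesuff} holds vacuously.
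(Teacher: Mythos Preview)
Your proof is correct and follows essentially the same route as the paper: invoke Lemma~\ref{lem:privlossGM} to get $L_{M,x,x'}, L_{M,x',x} \sim \cN(\eta,2\eta)$ with $\eta = D^2/(2\sigma^2)$, standardize, apply the symmetry $1-\Phi(t)=\Phi(-t)$, and simplify $(\eta \pm \varepsilon)/\sqrt{2\eta}$. Your explicit remark that $D$ is symmetric in $x,x'$ (so the same $\eta$ applies to both orderings) and your handling of the $D=0$ edge case are minor additions the paper leaves implicit.
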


This result specializes the left hand side of \eqref{eqn:necesuff} in terms of the distance $D = \norm{f(x) - f(x')}$ between the output means on a pair of neighbouring datasets. To complete the derivation of our analytic Gaussian mechanism we need to ensure that \eqref{eqn:necesuff} is satisfied for \emph{every} pair $x \simeq x'$. The next lemma shows that this reduces to plugging the global $L_2$ sensitivity $\Delta$ in the place of $D$ in \eqref{eqn:L1} and \eqref{eqn:L2}.

\begin{lemma}\label{lem:monotone}
For any $\varepsilon \geq 0$, the function $h : \R_{\geq 0} \to \R$ defined as follows is monotonically increasing:
\begin{align*}
h(\eta) = \Pr[\cN(\eta,2\eta) \geq \varepsilon] - e^{\varepsilon} \Pr[\cN(\eta,2\eta) \leq - \varepsilon] \enspace.
\end{align*}
\end{lemma}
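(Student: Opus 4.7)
\textbf{Proof plan for Lemma \ref{lem:monotone}.}

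The plan is to rewrite $h$ using the standard Gaussian CDF, then differentiate and exploit a clean algebraic identity that makes the two terms collapse.

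First, I would standardize: for $X \sim \cN(\eta, 2\eta)$, write
\begin{align*}
\Pr[X \geq \varepsilon] = \Phi(u(\eta)), \qquad \Pr[X \leq -\varepsilon] = \Phi(v(\eta)),
\end{align*}
where $u(\eta) = \sqrt{\eta/2} - \varepsilon/\sqrt{2\eta}$ and $v(\eta) = -\sqrt{\eta/2} - \varepsilon/\sqrt{2\eta}$, using $1-\Phi(t) = \Phi(-t)$. (These are the same expressions appearing in Lemma \ref{lem:LtoCDF} with $D/\sigma = \sqrt{2\eta}$.) Thus $h(\eta) = \Phi(u(\eta)) - e^{\varepsilon}\Phi(v(\eta))$.

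Next, I would differentiate. A direct calculation gives
\begin{align*}
u'(\eta) = \frac{1}{2\sqrt{2\eta}} + \frac{\varepsilon}{(2\eta)^{3/2}}, \qquad
v'(\eta) = -\frac{1}{2\sqrt{2\eta}} + \frac{\varepsilon}{(2\eta)^{3/2}},
\end{align*}
so $u'(\eta) - v'(\eta) = 1/\sqrt{2\eta}$. By the chain rule,
\begin{align*}
h'(\eta) = \phi(u(\eta))\, u'(\eta) - e^{\varepsilon}\phi(v(\eta))\, v'(\eta),
\end{align*}
where $\phi$ is the standard normal density.

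The key step is the identity $v(\eta)^2 - u(\eta)^2 = 2\varepsilon$, which I would check by expanding: both squares equal $\eta/2 + \varepsilon^2/(2\eta)$ plus a cross-term $\mp \varepsilon$ (the cross-term arises because $\sqrt{\eta/2}/\sqrt{2\eta} = 1/2$). This immediately gives
\begin{align*}
e^{\varepsilon}\phi(v(\eta)) = e^{\varepsilon}\cdot e^{-v(\eta)^2/2}/\sqrt{2\pi} = e^{-u(\eta)^2/2}/\sqrt{2\pi} = \phi(u(\eta)).
\end{align*}
Substituting back, the two coefficients in front of $\phi(u(\eta))$ combine and
\begin{align*}
h'(\eta) = \phi(u(\eta))\bigl(u'(\eta) - v'(\eta)\bigr) = \frac{\phi(u(\eta))}{\sqrt{2\eta}} \geq 0,
\end{align*}
proving monotonicity. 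The boundary case $\eta = 0$ can be handled by observing $h(0) = \one\{\varepsilon = 0\}(1 - e^\varepsilon) = 0$ (or by continuity, since for $\eta > 0$ we obtain $h$ continuous with nonnegative derivative and taking the limit $\eta \downarrow 0$).

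The only nontrivial step is spotting the identity $v^2 - u^2 = 2\varepsilon$; everything else is routine differentiation. This identity is, in essence, the same phenomenon that underlies the $e^{\varepsilon}$ factor in the definition of DP when the privacy loss is Gaussian, so it is natural to expect it here.
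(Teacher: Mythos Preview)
Your proposal is correct and follows essentially the same approach as the paper: both rewrite $h$ via the Gaussian CDF, differentiate, and use the identity $v(\eta)^2 - u(\eta)^2 = 2\varepsilon$ (the paper writes it as $a(\eta)^2 + 2\varepsilon = b(\eta)^2$) to collapse the derivative to a manifestly nonnegative expression. Your presentation is slightly more streamlined in that you substitute $e^{\varepsilon}\phi(v) = \phi(u)$ before regrouping, whereas the paper first splits $h'$ into two pieces and shows one of them vanishes, but the underlying computation is identical.
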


Now we are ready to state our main result, whose proof follows directly from Theorem~\ref{thm:necesuff}, Lemma~\ref{lem:monotone}, and equations \eqref{eqn:L1} and \eqref{eqn:L2}.

\begin{theorem}[Analytic Gaussian Mechanism]\label{thm:aGMDP}
Let $f : \X \to \R^d$ be a function with global $L_2$ sensitivity $\Delta$.
For any $\varepsilon \geq 0$ and $\delta \in [0,1]$, the Gaussian output perturbation mechanism $M(x) = f(x) + Z$ with $Z \sim \cN(0,\sigma^2 I)$ is $(\varepsilon,\delta)$-DP if and only if
\begin{align}\label{eqn:N}
\Phi\left(\frac{\Delta}{2 \sigma} - \frac{ \varepsilon \sigma}{\Delta}\right) - e^{\varepsilon} \Phi\left(- \frac{\Delta}{2 \sigma} - \frac{ \varepsilon \sigma}{\Delta}\right) \leq \delta \enspace.
\end{align}
\end{theorem}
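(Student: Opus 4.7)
The plan is to stitch together the three preceding results in a straightforward manner; no new analysis is required beyond a careful handling of the supremum that defines the global sensitivity.

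First I would apply Theorem~\ref{thm:necesuff} to reduce the claim that $M$ is $(\varepsilon,\delta)$-DP to the statement that, for every pair $x \simeq x'$,
\begin{align*}
\Pr[L_{M,x,x'} \geq \varepsilon] - e^{\varepsilon} \Pr[L_{M,x',x} \leq -\varepsilon] \leq \delta \enspace.
\end{align*}
Since $M$ is a Gaussian output perturbation mechanism, Lemma~\ref{lem:LtoCDF} lets me rewrite this condition entirely in terms of the Gaussian CDF: for $D = \norm{f(x) - f(x')}$ it becomes
\begin{align*}
\Phi\!\left(\tfrac{D}{2\sigma} - \tfrac{\varepsilon\sigma}{D}\right) - e^{\varepsilon}\,\Phi\!\left(-\tfrac{D}{2\sigma} - \tfrac{\varepsilon\sigma}{D}\right) \leq \delta \enspace.
\end{align*}
Call the left-hand side $g(D)$.

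Second, I would argue that it suffices to check this inequality at $D = \Delta$. By Lemma~\ref{lem:privlossGM} the two probabilities in \eqref{eqn:necesuff} coincide with those appearing in the definition of $h$ in Lemma~\ref{lem:monotone} when one sets $\eta = D^2/(2\sigma^2)$; therefore $g(D) = h(D^2/(2\sigma^2))$. Since $\eta \mapsto h(\eta)$ is monotonically increasing (Lemma~\ref{lem:monotone}) and $D \mapsto D^2/(2\sigma^2)$ is monotonically increasing on $D \geq 0$, the composition $g$ is monotonically increasing in $D$. Consequently $g(D) \leq \delta$ for all $x \simeq x'$ if and only if $\sup_{x \simeq x'} g(D) \leq \delta$, i.e.\ if and only if $g(\Delta) \leq \delta$, which is precisely condition \eqref{eqn:N}.

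The only mild subtlety is that the supremum defining $\Delta$ need not be attained; but $g$ is continuous in $D$, so monotonicity ensures $\sup_{D \in [0,\Delta)} g(D) = g(\Delta)$, and the equivalence between the pointwise condition over all neighbouring pairs and the single condition at $\Delta$ goes through. I do not anticipate any genuine obstacle, as every ingredient (the necessary and sufficient characterization, the CDF expressions, and the monotonicity of $h$) has already been established; the work here is simply in assembling them and observing that maximizing over neighbouring pairs reduces to substituting $\Delta$ for $D$.
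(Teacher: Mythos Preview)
Your proposal is correct and mirrors the paper's own argument exactly: the paper states that Theorem~\ref{thm:aGMDP} ``follows directly from Theorem~\ref{thm:necesuff}, Lemma~\ref{lem:monotone}, and equations \eqref{eqn:L1} and \eqref{eqn:L2},'' which is precisely the assembly you describe. Your added remark about continuity handling the case where the supremum defining $\Delta$ is not attained is a nice refinement that the paper leaves implicit.
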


This result shows that in order to obtain an $(\varepsilon,\delta)$-DP Gaussian output perturbation mechanism for a function $f$ with global $L_2$ sensitivity $\Delta$ it is enough to find a noise variance $\sigma^2$ satisfying \eqref{eqn:N}. One could now use upper and lower bounds for the tail of the Gaussian CDF to derive an analytic expression for a parameter $\sigma$ satisfying this constraint. However, this again leads to a suboptimal result due to the slack in these tail bounds in the non-asymptotic regime. Instead, we propose to find $\sigma$ using a numerical algorithm by leveraging the fact that the Gaussian CDF can be written as $\Phi(t) = (1 + \mathsf{erf}(t/\sqrt{2}))/2$, where $\mathsf{erf}$ is the standard error function. Efficient implementations of this function to very high accuracies are provided by most statistical and numerical software packages. However, this strategy requires some care in order to avoid numerical stability issues around the point where the expression $\Delta/2 \sigma - \varepsilon \sigma/\Delta$ in \eqref{eqn:N} changes sign. Thus, we further massage the left hand side \eqref{eqn:N} we obtain the implementation of the analytic Gaussian mechanism given in Algorithm~\ref{alg:newGM}. The correctness of this implementation is provided by the following result.

\begin{theorem}\label{thm:aGM}
Let $f$ be a function with global $L_2$ sensitivity $\Delta$. For any $\varepsilon > 0$ and $\delta \in (0,1)$, the mechanism described in Algorithm~\ref{alg:newGM} is $(\varepsilon,\delta)$-DP.
\end{theorem}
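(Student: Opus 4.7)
The plan is to reduce Theorem~\ref{thm:aGM} to Theorem~\ref{thm:aGMDP}, so that it suffices to show the $\sigma$ returned by Algorithm~\ref{alg:newGM} satisfies the inequality \eqref{eqn:N}. Introducing the reparameterization $u = \Delta/(2\sigma)$ and $v = \varepsilon \sigma/\Delta$, the LHS of \eqref{eqn:N} reads $\Phi(u-v) - e^{\varepsilon} \Phi(-u-v)$. The only numerically delicate point is the sign change of $u-v$, which occurs exactly at $\sigma_0 = \Delta/\sqrt{2\varepsilon}$. Evaluating the LHS at $\sigma = \sigma_0$ produces a stable quantity $B_0 := \Phi(0) - e^{\varepsilon}\Phi(-\sqrt{2\varepsilon})$; comparing $B_0$ to $\delta$ determines on which side of $\sigma_0$ the feasible $\sigma$ lies.

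First I would handle the regime $\sigma \leq \sigma_0$ (equivalently $u \geq v$, chosen when $B_0 \leq \delta$). Using the identity $\Phi(t) = 1 - \Phi(-t)$ applied only where cancellation would otherwise occur, one rewrites the LHS of \eqref{eqn:N} purely in terms of $\Phi$ evaluated at nonpositive arguments, which can be computed to high relative precision via $\mathsf{erfc}$. The symmetric regime $\sigma > \sigma_0$ (i.e.\ $u < v$, chosen when $B_0 > \delta$) is treated analogously. In each regime the rewritten objective corresponds to one of the two branches of Algorithm~\ref{alg:newGM}, establishing algebraic equivalence with \eqref{eqn:N} on that branch.

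Next, I would show that on each regime the objective is strictly monotone in $\sigma$. For the branch $u \geq v$ this follows by re-parameterizing as $\eta = \Delta^2/(2\sigma^2)$ and invoking Lemma~\ref{lem:monotone} (composed with the monotone change of variables $\sigma \mapsto \eta$); the $u < v$ branch is handled by the same argument. Combined with the boundary behaviour that the LHS of \eqref{eqn:N} tends to $1$ as $\sigma \to 0$ and to $0$ as $\sigma \to \infty$, continuity and monotonicity guarantee a unique $\sigma^{\star}$ at which equality holds in \eqref{eqn:N}. Verifying that the bracketing interval used by the bisection phase of Algorithm~\ref{alg:newGM} contains $\sigma^{\star}$, e.g.\ by checking that the objective at the two endpoints straddles $\delta$ (which is ensured by the doubling step that inflates an initial guess until the straddle condition holds), then shows that on termination the returned value satisfies $\sigma \geq \sigma^{\star}$, so \eqref{eqn:N} holds and Theorem~\ref{thm:aGMDP} delivers $(\varepsilon,\delta)$-DP.

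The main obstacle is the stability rewrite: proving that the two branch expressions used in Algorithm~\ref{alg:newGM} really coincide with the LHS of \eqref{eqn:N} on their respective regimes, and that each is monotone enough to support bisection to any prescribed tolerance without the bisection returning a value on the wrong side of $\sigma^{\star}$. The remaining pieces, namely the bracketing argument and the appeal to Theorem~\ref{thm:aGMDP}, are essentially bookkeeping with identities for $\Phi$.
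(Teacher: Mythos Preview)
Your outline is correct and matches the paper's approach: reduce to Theorem~\ref{thm:aGMDP}, split at the sign change $\sigma_0 = \Delta/\sqrt{2\varepsilon}$, verify that each branch of Algorithm~\ref{alg:newGM} agrees with the left-hand side of \eqref{eqn:N} on its regime, and use monotonicity (Lemma~\ref{lem:monotone}) to justify that the $\sup$/$\inf$ in the algorithm pick the optimal value. Two points of divergence are worth noting.

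First, the ``stability rewrite'' in the paper is \emph{not} the reflection identity $\Phi(t)=1-\Phi(-t)$. The paper substitutes $\sigma = \alpha\Delta/\sqrt{2\varepsilon}$, so that the two arguments in \eqref{eqn:N} become $\sqrt{\varepsilon/2}\,(1/\alpha-\alpha)$ and $-\sqrt{\varepsilon/2}\,(1/\alpha+\alpha)$, and then uses the algebraic identity $(1/\alpha+\alpha)^2=(1/\alpha-\alpha)^2+4$ to introduce the new variables $u=(\alpha-1/\alpha)^2/2$ (for $\alpha\geq 1$) and $v=(1/\alpha-\alpha)^2/2$ (for $\alpha<1$). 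This yields exactly $B_\varepsilon^{-}(u)$ and $B_\varepsilon^{+}(v)$ as written in the algorithm; observe that $B_\varepsilon^{+}(v)=\Phi(\sqrt{\varepsilon v})-e^\varepsilon\Phi(-\sqrt{\varepsilon(v+2)})$ still has a \emph{positive} argument in its first term, so your description ``purely in terms of $\Phi$ at nonpositive arguments'' does not match what the algorithm actually computes. Relatedly, your temporary variables $u=\Delta/(2\sigma)$ and $v=\varepsilon\sigma/\Delta$ collide with the algorithm's $u,v$, which denote entirely different quantities; you will need to rename to carry out the verification you flag as the main obstacle.

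Second, the paper's proof treats $u^*$ and $v^*$ as exact infimum/supremum and does not analyze bisection tolerance or bracketing; those implementation remarks appear only informally in the surrounding text. For the theorem as stated, the purely algebraic equivalence plus Lemma~\ref{lem:monotone} suffices, and your last paragraph's concerns about ``wrong side of $\sigma^\star$'' are not part of the mathematical statement being proved.
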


\begin{algorithm}[t]
\DontPrintSemicolon
\caption{Analytic Gaussian Mechanism}\label{alg:newGM}
\KwSty{Public Inputs:} $f$, $\Delta$, $\varepsilon$, $\delta$\;
\KwSty{Private Inputs:} $x$\;
Let $\delta_0 = \Phi(0) - e^{\varepsilon} \Phi(- \sqrt{2 \varepsilon})$\;
\eIf{$\delta \geq \delta_0$}{
Define $B_{\varepsilon}^+(v) = \Phi(\sqrt{\varepsilon v})
- e^{\varepsilon} \Phi(-\sqrt{\varepsilon (v + 2)})$\;
Compute $v^* = \sup \{ v \in \R_{\geq 0} : B_{\varepsilon}^+(v) \leq \delta \}$\;
Let $\alpha = \sqrt{1+ v^*/2} - \sqrt{v^*/2}$\;
}{
Define $B_{\varepsilon}^-(u) = \Phi(-\sqrt{\varepsilon u})
- e^{\varepsilon} \Phi(-\sqrt{\varepsilon (u + 2)})$\;
Compute $u^* = \inf \{ u \in \R_{\geq 0} : B_{\varepsilon}^-(u) \leq \delta \}$\;
Let $\alpha = \sqrt{1 + u^*/2} + \sqrt{u^*/2}$\;
}
Let $\sigma = \alpha \Delta / \sqrt{2 \varepsilon}$\;
Return $f(x) + \cN(0,\sigma^2 I)$
\end{algorithm}

Given a numerical oracle for computing $\Phi(t)$ based on the error function it is relatively straightforward to implement a solver for finding the values $v^*$ and $u^*$ needed in Algorithm~\ref{alg:newGM}. For example, using the fact that $B_{\varepsilon}^+(v)$ is monotonically increasing we see that computing $v^*$ is a root finding problem for which one can use Newton's method since the derivative of $\Phi(t)$ can be computed in closed form using Leibniz's rule. In practice we find that a simple scheme based on binary search initiated from an interval obtained by finding the smallest $k \in \N$ such that $B_{\varepsilon}^+(2^k) > \delta$ provides a very efficient and robust way to find $v^*$ up to arbitrary accuracies (the same applies to $u^*$).

\section{Optimal Denoising}\label{sec:denoise}
Can we improve the performance of analytical Gaussian mechanism even further? The answer is ``yes'' and ``no''. We can't because Algorithm~\ref{alg:newGM} is already the exact calibration of the Gaussian noise level to the given privacy budget. But if we consider the problem of designing the best differentially private procedure $M(x)$ that approximates $f(x)$, then there could still be room for improvement.

In this section, we consider a specific class of mechanisms that \emph{denoise} the output of a Gaussian mechanism. Let $\hat{y} \sim \cN(f(x),\sigma^2 I)$, we are interested in designing a post-processing function $g$ such that $\tilde{y} = g(\hat{y})$ is closer to $f(x)$ than $\hat{y}$. This class of mechanisms are of particular interest for differential privacy because  (1) since differential privacy is preserved by post-processing, releasing a function $\tilde{y} = g(\hat{y})$ of a differentially private output is again differentially private; (2) since information about $f$ and the distribution of the noise are publicly known, this information can be leveraged to design denoising functions.

This is a statistical estimation problem, where $f(x)$ is the underlying parameter and $\hat{y}$ is the data. Since in this case we are adding the noise ourselves, it is possible to use the classical statistical theory on Gaussian models \emph{as is} because the Gaussian assumption is now true by construction.
This is however an unusual estimation problem where all we observe is a single data point. Since $\hat{y}$ is the maximum likelihood estimator, if there is no additional information about $f(x)$, we cannot hope to improve the estimation error \emph{uniformly} over all $f(x) \in \R^d$. But there is still something we can do when we consider either of the following assumptions:
(A.1) $x$ is drawn from some underlying distribution, thus inducing some distribution on $f(x)$; or,
(A.2) $\|f(x)\|_p \leq B$ for some $p, B> 0$, where $\|\cdot\|_p$ is the $L_p$-norm (or pseudo-norm when $p < 1$).

\paragraph{Optimal Bayesian denoising.}
Assumption A.1 translates the problem of optimal denoising into a Bayesian estimation problem, where the underlying parameter $f(x)$ has a prior distribution, and the task is to find an estimator that attains the Bayes risk --- the minimum of the average estimation error integrated over a prior $\pi$, defined as
$$ R(\pi) = \min_{g:  \R^d \rightarrow \R^d } \E\big[ \E  [\norm{g(\hat{y}) - f(x)}^2 |  f(x)] \big] \enspace.$$ 
For square loss, the Bayes estimator is simply the posterior mean estimator, as the following theorem shows:
\begin{theorem}\label{thm:bayes}
		Let $x \sim  \pi$ and assume the induced distribution of $f(x)$ is square integrable.
		 Then the Bayes estimator $\tilde{y}_{\mathrm{Bayes}}$ is given by
	$$
	\tilde{y}_{\mathrm{Bayes}}  = \argmin_{g: \R^d \rightarrow \R^d}  \E\left[\|g(\hat{y}) - f(x)\|^2\right] =  \E[ f(x) | \hat{y}] \enspace.
	$$
\end{theorem}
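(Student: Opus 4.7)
The plan is to invoke the standard bias-variance decomposition for the conditional mean, which is the classical fact that the posterior mean is the $L^2$-projection of $f(x)$ onto the space of square-integrable functions of $\hat{y}$. First I would apply the tower property to rewrite the objective as
\begin{align*}
\E\left[\|g(\hat{y}) - f(x)\|^2\right] = \E\Bigl[\,\E\bigl[\|g(\hat{y}) - f(x)\|^2 \,\big|\, \hat{y}\bigr]\Bigr] \enspace,
\end{align*}
so that minimizing the outer expectation reduces to minimizing the inner conditional expectation pointwise in $\hat{y}$. The square-integrability assumption on $f(x)$, combined with the fact that $\hat{y} = f(x) + Z$ with $Z$ independent Gaussian, guarantees that $\E[\|f(x)\|^2 \mid \hat{y}]$ is finite almost surely, so the conditional mean $m(\hat{y}) := \E[f(x) \mid \hat{y}]$ is well-defined and square-integrable.

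Next I would fix $\hat{y}$ and treat $c = g(\hat{y}) \in \R^d$ as a constant, adding and subtracting $m(\hat{y})$ inside the norm:
\begin{align*}
\|c - f(x)\|^2 = \|c - m(\hat{y})\|^2 + 2 \langle c - m(\hat{y}),\, m(\hat{y}) - f(x)\rangle + \|m(\hat{y}) - f(x)\|^2 \enspace.
\end{align*}
Taking the conditional expectation given $\hat{y}$, the cross term vanishes because $c - m(\hat{y})$ is $\hat{y}$-measurable and $\E[m(\hat{y}) - f(x) \mid \hat{y}] = m(\hat{y}) - m(\hat{y}) = 0$ by definition of $m$. This leaves
\begin{align*}
\E\bigl[\|c - f(x)\|^2 \,\big|\, \hat{y}\bigr] = \|c - m(\hat{y})\|^2 + \E\bigl[\|m(\hat{y}) - f(x)\|^2 \,\big|\, \hat{y}\bigr] \enspace,
\end{align*}
where the second term does not depend on $c$. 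The first term is minimized (and equals zero) precisely when $c = m(\hat{y})$, and this choice is attained by the measurable function $g^*(\hat{y}) = m(\hat{y}) = \E[f(x) \mid \hat{y}]$.

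Finally, since the pointwise minimum of the inner conditional expectation is achieved simultaneously for every $\hat{y}$ by the single function $g^*$, taking the outer expectation shows that $g^*$ attains the global minimum of $\E[\|g(\hat{y}) - f(x)\|^2]$ over all measurable $g: \R^d \to \R^d$, proving that $\tilde{y}_{\mathrm{Bayes}} = \E[f(x) \mid \hat{y}]$. There is no real obstacle here: the only mild subtlety is ensuring that the conditional expectation exists and is a bona fide measurable function (handled by square-integrability of $f(x)$ and Jensen's inequality giving $\E[\|m(\hat{y})\|^2] \leq \E[\|f(x)\|^2] < \infty$), and that one is free to restrict to measurable $g$ with finite risk (otherwise the risk is infinite and the inequality is trivial).
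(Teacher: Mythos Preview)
Your proof is correct and follows essentially the same strategy as the paper: reduce to a pointwise minimization of the conditional risk $\E[\|c - f(x)\|^2 \mid \hat{y}]$ over $c \in \R^d$, and observe that the minimizer is the posterior mean. The only minor difference is in how the inner minimization is carried out: the paper differentiates $r(\theta \mid \hat{y})$ in $\theta$ and uses Fubini to exchange derivative and integral, whereas you add and subtract $m(\hat{y})$ and use the orthogonality $\E[m(\hat{y}) - f(x) \mid \hat{y}] = 0$ to kill the cross term. Your version is slightly cleaner in that it avoids any appeal to differentiation under the integral sign and is more explicit about the integrability and measurability caveats, but the two arguments are interchangeable and equally standard.
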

The proof can be found in any standard statistics textbook \citep[see, e.g.,][]{lehmann2006theory}. 
One may ask what the corresponding MSE is and how much  it improves over the version without post-processing. The answer depends on the prior and the amount of noise added for differential privacy. When $f(x)\sim \cN(0, w^2I)$, the posterior mean estimator can be written analytically into
$\tilde{y}_{\mathrm{Bayes}}   =  (w^2 / (w^2+\sigma^2)) \hat{y}$,
and the corresponding Bayes risk is
$\E[\|\tilde{y}_{\mathrm{Bayes}} - f(x)\|^2]  = d w^2 \sigma^2 / (\sigma^2 + w^2)$.
In other word, we get a factor of $w^2/(w^2+\sigma^2)$ improvement over simply using $\hat{y}$.

In general, there is no analytical form for the posterior mean, but if we can evaluate the density of $f(x)$ or sample from the distribution of $x$, then we can obtain an arbitrarily good approximation of $\tilde{y}_{\mathrm{Bayes}}$ using Markov Chain Monte Carlo techniques.

\paragraph{Optimal frequentist denoising.}
Assumption A.2 spells out a minimax estimation problem, where the underlying parameter $f(x)$ is assumed to be within a set $S\subset \R^d$. In particular, we are interested in finding $\tilde{y}_{\mathrm{minimax}}$ that attains the minimax risk
$$
R(S) =  \min_{g:  \R^d \rightarrow \R^d }  \max_{f(x)\in S  } \E \left[\|g(\hat{y}) - f(x)\|^2\right],
$$
on $L_p$ balls $S = \cB(p,B) = \{ y \in \R^d \;|\;  \|y\|_p \leq B \}$ of radius $B$.

A complete characterization of this minimax risk (up to a constant) is given by \citet[Proposition~5]{birge2001gaussian}, who show that in the non-trivial region\footnote{When $\sqrt{\log d} \leq B/\sigma  \leq c_p d^{1/p}$ for a constant $c_p$ that depends only on $p$.} of the signal to noise ratio $B/\sigma$, the ball $S = \cB(p,B)$ satisfies
\begin{equation}\label{eq:minimax_pnorm_ball}
R(S)  = \Theta\left(B^p\sigma^{2-p} \left(1 + \log\left(\frac{d\sigma^p}{B^p}\right)\right)^{1-p/2}\right)
\end{equation}
for $0<p<2$ and when $p\geq 2$, \citet{donoho1990minimax} show that
$$R(S) = \Theta\left(\frac{B^2\sigma^2}{\sigma^2 + B^2/d}\right).$$

Deriving exact minimax estimators is challenging and most analyses assume certain asymptotic regimes (see the case for $p=2$ by \citet{bickel1981minimax}). Nonetheless, some techniques have been shown to match $R(\cB(p,B))$ up to a small constant factor in the finite sample regime \citep[see, e.g.,][]{donoho1990minimax,donoho1994minimax}.
This means that we can often improve the square error from $d \sigma^2$ to $R(\cB(p,B))$ when we have the additional information that $f(x)$ is in some $L_p$ ball. This could be especially helpful in the high-dimensional case for $p<2$. For instance if $p=1$ and $B=\sigma$, then we obtain a risk $\sigma B \sqrt{1+\log(d \sigma/B )}$, which improves exponentially in $d$ over the $d\sigma^2$ risk of $\hat{y}$. More practically, if $f(x)$ is a sparse histogram with $s$ non-zero elements, then taking $p\rightarrow 0$ will result in an error bound on the order of  $s \sigma^2 (1+\log(d))$, which is linear in the sparsity $s$ rather than the dimension $d$. 

\paragraph{Adaptive estimation.}
What if we do not know the prior parameter $w^2$, or a right choice of $B$ and $p$? Can we still come up with estimators that take advantage of these structures? It turns out that this is the problem of designing \emph{adaptive} estimators which sits at the heart of statistical research.
An \emph{adaptive} estimator in our case, is one that does not need to know $w^2$ or a pair of $B$ and $p$, yet behave nearly as well as Bayes estimator that knows $w^2$ or the minimax estimator that knows $B$ and $p$ for each parameter regime. 

We first give an example of an adaptive Bayes estimator that does not require us to specify a prior, yet can perform almost as well as the optimal Bayes estimator for all isotropic Gaussian prior simultaneously.
\begin{theorem}[James-Stein estimator and its adaptivity]\label{thm:js}
	When $d\geq3$, substituting $w^2$ in $\tilde{y}_{\mathrm{Bayes}}$ with its maximum likelihood estimate under 
	$$f(x)\sim \cN(0,w^2 I) \enspace, \;\; \hat{y} | f(x) \sim \cN(f(x), \sigma^2 I) $$
	produces the James-Stein estimator
	$$
	\tilde{y}_{\mathrm{JS}} = \left(  1-  \frac{(d-2)\sigma^2}{\|\hat{y}\|^2}\right)\hat{y}.
	$$
	Moreover, it has an MSE
	$$
	\E \left[\|\tilde{y}_{\mathrm{JS}} -  f(x)\|^2\right]   = d\sigma^2 \left(  1 - \frac{(d-2)^2}{d^2} \frac{\sigma^2}{w^2+\sigma^2} \right).
	$$
\end{theorem}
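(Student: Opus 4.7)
The plan is to handle the theorem's two assertions separately: first, derive the James--Stein form by plug-in substitution in the Bayes estimator of Theorem~\ref{thm:bayes}, and second, compute its MSE via Stein's lemma and a single chi-square moment identity.

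For the first part, under the stated two-level model the marginal law of $\hat{y}$ is $\cN(0,(w^2+\sigma^2)I)$. Rewrite the posterior-mean estimator of Theorem~\ref{thm:bayes} as $\tilde{y}_{\mathrm{Bayes}}=(1-\sigma^2/(w^2+\sigma^2))\hat{y}$, so the only unknown is $1/(w^2+\sigma^2)$. Since $\|\hat{y}\|^2/(w^2+\sigma^2)\sim\chi^2_d$, the identity $\E[1/\chi^2_d]=1/(d-2)$ (valid for $d\geq 3$) shows that $(d-2)/\|\hat{y}\|^2$ is an (unbiased) estimator of $1/(w^2+\sigma^2)$ derived from the marginal likelihood; substituting it into the shrinkage factor produces $\tilde{y}_{\mathrm{JS}}=(1-(d-2)\sigma^2/\|\hat{y}\|^2)\hat{y}$.

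For the MSE, set $g(\hat{y})=(d-2)\sigma^2\hat{y}/\|\hat{y}\|^2$ so that $\tilde{y}_{\mathrm{JS}}=\hat{y}-g(\hat{y})$. Conditioning on $f(x)=\mu$ and expanding the squared norm gives
\begin{align*}
\E[\|\tilde{y}_{\mathrm{JS}}-\mu\|^2\mid\mu]
= d\sigma^2 - 2\,\E[(\hat{y}-\mu)^{\top}g(\hat{y})\mid\mu] + \E[\|g(\hat{y})\|^2\mid\mu].
\end{align*}
The cross term is handled by Stein's lemma for $\hat{y}\mid\mu\sim\cN(\mu,\sigma^2 I)$, namely $\E[(\hat{y}-\mu)^{\top}g(\hat{y})\mid\mu]=\sigma^2\,\E[\nabla\cdot g(\hat{y})\mid\mu]$. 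A coordinatewise computation yields $\nabla\cdot(\hat{y}/\|\hat{y}\|^2)=(d-2)/\|\hat{y}\|^2$, so $\nabla\cdot g=(d-2)^2\sigma^2/\|\hat{y}\|^2$ and $\|g(\hat{y})\|^2=(d-2)^2\sigma^4/\|\hat{y}\|^2$. Combining, the conditional risk reduces to $d\sigma^2-(d-2)^2\sigma^4\,\E[\|\hat{y}\|^{-2}\mid\mu]$. By the tower rule, averaging over $\mu\sim\cN(0,w^2 I)$ turns $\E[\|\hat{y}\|^{-2}]$ into an expectation under the marginal $\cN(0,(w^2+\sigma^2)I)$, and reapplying the chi-square identity $\E[1/\chi^2_d]=1/(d-2)$ collapses the expression to the stated MSE.

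The main obstacle is the careful invocation of Stein's lemma, since $g$ is singular at the origin and the standard hypothesis asks for absolute integrability of each partial derivative. This reduces to checking $\E[\|\hat{y}\|^{-2}\mid\mu]<\infty$, which is standard for $d\geq 3$ because $y\mapsto\|y\|^{-2}$ is locally integrable in dimensions three and above while the Gaussian density is bounded near the origin. Once this is in place, the argument is essentially mechanical: one divergence computation and one chi-square moment.
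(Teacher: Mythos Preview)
Your MSE computation via Stein's lemma is correct and more complete than the paper's, which simply declares the calculation ``straightforward'' without carrying it out. However, note that your argument does not actually reproduce the formula stated in the theorem: following your steps gives
\[
d\sigma^2 - (d-2)^2\sigma^4\,\E[\|\hat{y}\|^{-2}]
= d\sigma^2 - \frac{(d-2)\sigma^4}{w^2+\sigma^2}
= d\sigma^2\left(1 - \frac{d-2}{d}\cdot\frac{\sigma^2}{w^2+\sigma^2}\right),
\]
whereas the theorem claims a factor $(d-2)^2/d^2$ in place of $(d-2)/d$. Your derivation is the standard SURE calculation and yields the correct Bayes risk of the James--Stein estimator; the displayed constant in the theorem appears to be a typo. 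You should flag this discrepancy rather than assert that your computation ``collapses to the stated MSE.''

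For the first assertion your route differs from the paper's and does not quite establish what is claimed. The theorem says the James--Stein form arises by substituting the \emph{maximum likelihood estimate} of $w^2$ into $\tilde{y}_{\mathrm{Bayes}}$. The paper's proof writes the $\chi^2_d$ likelihood of $\|\hat{y}\|^2$ in $w^2$, differentiates, and solves to obtain $\widehat{w^2+\sigma^2}=\|\hat{y}\|^2/(d-2)$, then substitutes. You instead use $\E[1/\chi^2_d]=1/(d-2)$ to argue that $(d-2)/\|\hat{y}\|^2$ is an \emph{unbiased} estimator of $1/(w^2+\sigma^2)$ and plug that into the shrinkage factor. That is a perfectly legitimate empirical-Bayes construction and it lands on the same estimator, but it is a moment-matching argument, not an MLE computation; it does not verify the theorem's specific claim about how the plug-in arises. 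To match the statement as written you would need the short likelihood calculation the paper gives.
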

The James-Stein estimator has the property that it always improves the MLE $\hat{y}$ when $d\geq 3$ \citep{stein1956inadmissibility} and it always achieves a risk that is within a $d^2/(d-2)^2$ multiplicative factor of the Bayes risk of $\tilde{y}_{\mathrm{Bayes}}$
for any $w^2$. 
 
We now move on to describe a method that is adaptive to $B$ and $p$ in minimax estimation. Quite remarkably, \citet{donoho1995noising} shows that choosing $\lambda = \sigma\sqrt{2 \log d}$ in the soft-thresholding estimator
\begin{equation}\label{eq:soft-thresh}
\tilde{y}_{\mathrm{TH}}   =  \text{sign}(\hat{y})\max\{0,|\hat{y}|  - \lambda\}
\end{equation}
yields a nearly optimal estimator for every $L_p$ ball. 
\begin{theorem}[The adaptivity of soft-thresholding, Theorem 4.2 of \citep{donoho1995noising}]\label{thm:don}
	Let $S = \cB(p,B)$ for some $p, B > 0$. The soft-thresholding estimator with $\lambda = \sigma\sqrt{2 \log d}$ obeys that
	$$\sup_{f(x)\in S}\E\left[\|\tilde{y}_{\mathrm{TH}}  -  f(x)\|^2\right] \leq (2\log d + 1)(\sigma^2 + 2.22 R(S)) \enspace.$$
\end{theorem}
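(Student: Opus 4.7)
The plan is to follow the classical two-step template of \citet{donoho1995noising}: first a coordinate-wise oracle inequality comparing soft-thresholding to a diagonal oracle that knows which coordinates to keep, and then a geometric bound showing that the worst-case oracle risk on an $L_p$ ball is at most $2.22$ times the minimax risk $R(S)$. Combining these two inequalities and summing over coordinates should produce the stated bound $(2\log d +1)(\sigma^2 + 2.22\, R(S))$.

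For the first step, I would analyse a single coordinate $\hat{y}_j \sim \cN(\mu_j,\sigma^2)$ with $\mu_j = f(x)_j$ and show that soft-thresholding with $\lambda = \sigma\sqrt{2\log d}$ obeys
\begin{equation*}
\E\bigl[(\tilde{y}_{\mathrm{TH},j}-\mu_j)^2\bigr] \le (2\log d + 1)\min(\mu_j^2,\sigma^2) + \sigma^2\cdot\Pr\bigl[|\hat{y}_j-\mu_j|\ge \lambda\bigr].
\end{equation*}
The two summands cover the two regimes: if $|\mu_j|$ is small the estimate is typically killed to zero (contributing at most $\mu_j^2$ plus a tail event), while if $|\mu_j|$ is large the bias-variance sum is at most $\lambda^2 + \sigma^2 = (2\log d + 1)\sigma^2$. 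A standard Gaussian tail estimate gives $\Pr[|\hat{y}_j - \mu_j|\ge \lambda] \le 1/d$, so summing over the $d$ coordinates yields the aggregate oracle inequality
\begin{equation*}
\E\,\norm{\tilde{y}_{\mathrm{TH}} - f(x)}^2 \le (2\log d + 1)\Bigl(\sigma^2 + \sum_{j=1}^d \min(\mu_j^2,\sigma^2)\Bigr).
\end{equation*}

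For the second step, I would invoke the $L_p$-ball calibration result asserting that
\begin{equation*}
\sup_{f(x)\in \cB(p,B)} \sum_{j=1}^d \min(\mu_j^2,\sigma^2) \le 2.22\cdot R(\cB(p,B)).
\end{equation*}
This is obtained by constructing a least-favourable prior that concentrates on a sparsity pattern matched to the ideal thresholding decomposition and then invoking the Ibragimov--Hasminskii-type lower bound for the one-dimensional bounded-normal-mean problem; the constant $2.22$ arises as the ratio between the diagonal ideal risk and the one-dimensional minimax Bayes risk. Taking the supremum of the Step-1 bound over $f(x)\in S$ and substituting this inequality yields the theorem.

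The main obstacle is Step 2: the numerical constant $2.22$ is the sharp value established in \citet{donoho1994minimax} and reproducing it requires a careful case analysis (separating $0<p\le 2$ from $p>2$) together with a quantitative version of the Ibragimov--Hasminskii lower bound for the bounded-mean Gaussian problem. By contrast, the coordinate-wise oracle inequality in Step 1 is a routine Gaussian tail computation and should go through as above with essentially no changes.
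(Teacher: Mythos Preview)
The paper does not give its own proof of this theorem. It is quoted verbatim as Theorem~4.2 of \citet{donoho1995noising}, and the appendix explicitly reproduces only the short arguments for Theorems~\ref{thm:bayes} and~\ref{thm:js}; Theorem~\ref{thm:don} is left to the cited reference. So there is nothing in the paper to compare your proposal against beyond the citation itself.

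That said, your outline is a faithful sketch of the Donoho--Johnstone argument underlying the cited result. Step~1 is the ideal-risk oracle inequality of \citet{donoho1994ideal}: the aggregate bound
\[
\E\norm{\tilde y_{\mathrm{TH}}-f(x)}^2 \le (2\log d+1)\Bigl(\sigma^2+\sum_{j=1}^d\min(\mu_j^2,\sigma^2)\Bigr)
\]
is exactly their Theorem~1, though your per-coordinate decomposition is slightly informal (the residual term when the threshold is exceeded is not literally $\sigma^2\Pr[|\hat y_j-\mu_j|\ge\lambda]$, but a tail integral that is bounded by it up to constants; the clean aggregate form is what one actually proves). Step~2, bounding $\sup_{f(x)\in\cB(p,B)}\sum_j\min(\mu_j^2,\sigma^2)$ by $2.22\,R(\cB(p,B))$, is precisely the calibration result of \citet{donoho1994minimax} that you identify, with the constant coming from the one-dimensional bounded-normal-mean problem. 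Your assessment of where the work lies---Step~2 and its constant---is accurate.
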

The result implies that the soft-thresholding estimator is nearly optimal for all balls up to a multiplicative factor of $4.44\log(d)$.

Thanks to the fact that we know the parameter $\sigma$ exactly, both $\tilde{y}_{\mathrm{JS}} $ and $\tilde{y}_{\mathrm{TH}}$ are now completely free of tuning parameters. Yet, they can achieve remarkable adaptivity that covers a large class of model assumptions and function classes. A relatively small price to pay for such adaptivity  is that we might lose a constant (or a $\log(d)$) factor. Whether such adaptivity is worth will vary on a case-by-case basis.

Take the problem of private releasing a histogram of $n$ items in $d$ bins. Theorem~\ref{thm:don} and Equation \eqref{eq:minimax_pnorm_ball} with $p\leq 1$ imply that the soft-thresholding estimator obeys
$$
\E \left[\|\tilde{y}_{\mathrm{TH}}  -  f(x)\|^2\right]  = \tilde{O} \left( \min\{s\sigma^2, n^{1/k}\sigma^{2-1/k}\}\right).
$$
where $s$ denotes the number of nonzero elements in $f(x)$ and $k$ is the largest power-law exponent greater than $1$ such that order statistics $f(x)^{(d-i+1)} \leq n i^{-k}$ for all $i=1,...,d$ and $\tilde{O}$ hides logarithmic factors in $d,  d\sigma/n$. 
The fact that $s \leq d$  implies that the soft-thresholding estimator improves over the naive private release for all $d,n,s$ and the $n^{1/k}$ factor suggests that we can take advantage of an unknown power law distribution even if the histogram is not really sparse. This makes our technique effective in the many data mining problems where power law distributions occur naturally \citep{faloutsos1999power}. 

\paragraph{Related work.}
Denoising as a post-processing step in the context of differentially privacy is not a new idea. Notably, \citet{barak2007privacy,hay2009accurate} show that a post-processing step enforcing consistency of contingency table releases and graph degree sequences leads to more accurate estimations. \citet{williams2010probabilistic} sets up the general statistical (Bayesian) inference problem of DP releases by integrating auxiliary information (a prior). \citet{karwa2016inference} exploits knowledge of the noise distribution use to achieve DP in the inference procedure of a network model and shows that it helps to preserve asymptotic efficiency. \citet{nikolov2013geometry} demonstrates that projecting linear regression solutions to a known $\ell_1$-ball improves the estimation error from $O(\mathrm{poly}(d))$ to $O(\mathrm{polylog}(d))$ when the underlying ground truth is sparse. \citet{bernstein2017differentially} uses Expectation--Maximization to denoise the parameters of a class of graphical models starting from noisy empirical moments obtained using the Laplace mechanism.

In all the above references there is some prior knowledge (constraint sets, sparsity or Bayesian prior) that is exploited to improve the utility of DP releases.
To the best of our knowledge, we are the first to consider ``adaptive estimation'' and demonstrate how classical techniques can be helpful even without such prior knowledge.
These estimators are not new; they have been known in the statistics literature for decades.
Our purpose is to compile facts that are relevant to the practice of DP and initiate a systematic study of how these ideas affect the utility of DP mechanisms, which we complement with the experimental evaluation presented in the next section.

\section{Numerical Experiments}\label{sec:exp}

This section provides an experimental evaluation of the improvements in utility provided by optimal calibration and adaptive denoising. First we numerically compare the variance of the analytic Gaussian mechanism and the classical mechanism for a variety of privacy parameters. Then we evaluate the contributions of denoising and analytic calibration against a series of baselines for the task of private mean estimation using synthetic data.
We also evaluate several denoising strategies on the task of releasing heat maps based on the New York City taxi dataset under differential privacy.
Further experiments are presented in Appendix~\ref{app:exp}, including an evaluation of denoising strategies for the task of
private histogram release.

\subsection{Analytic Gaussian Mechanism}\label{sec:expaGM}

\begin{figure*}[t]
\begin{center}
\begin{subfigure}[b]{0.244\textwidth}
\includegraphics[width=\textwidth]{./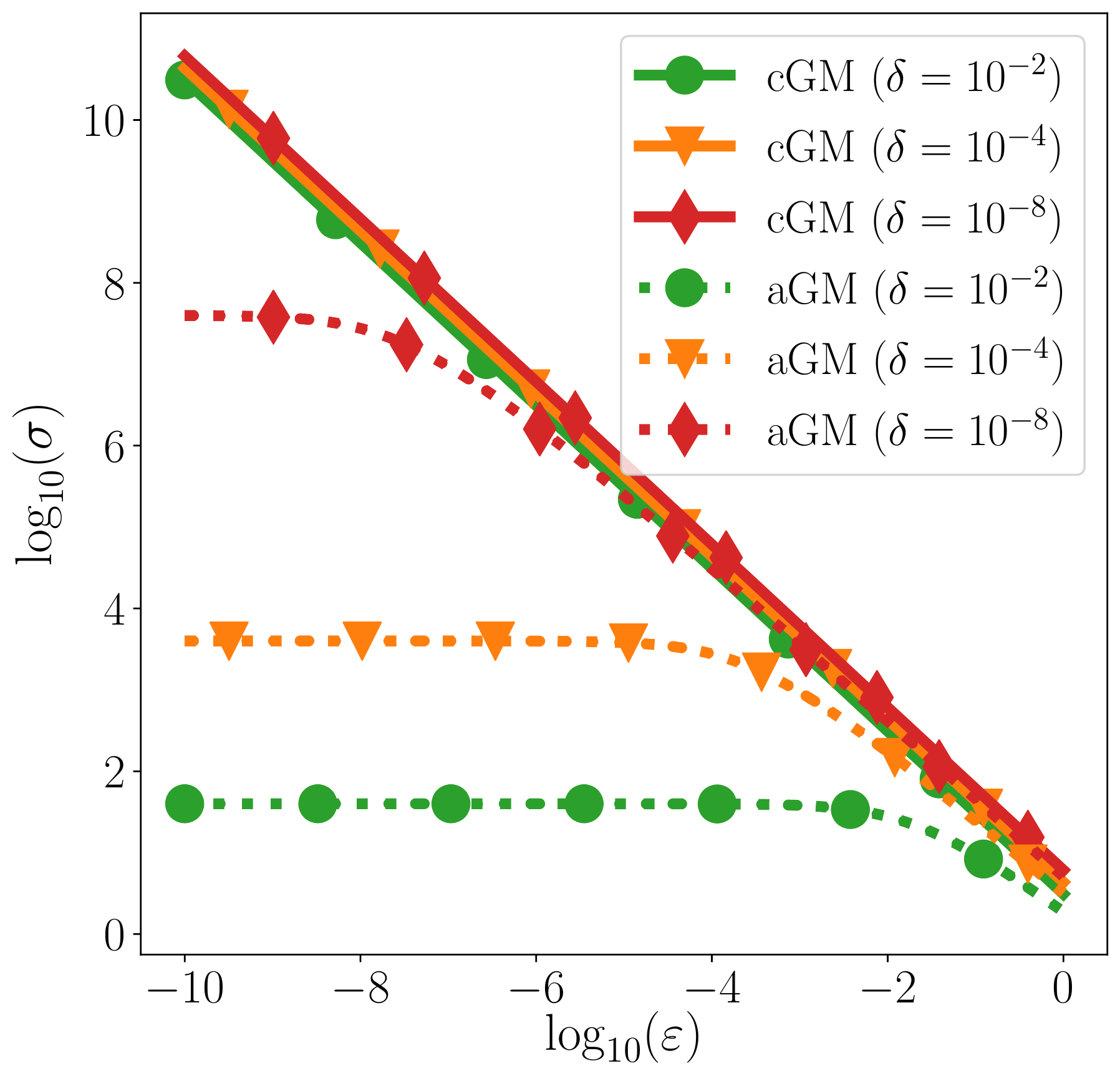}
\end{subfigure}
\begin{subfigure}[b]{0.244\textwidth}
\includegraphics[width=\textwidth]{./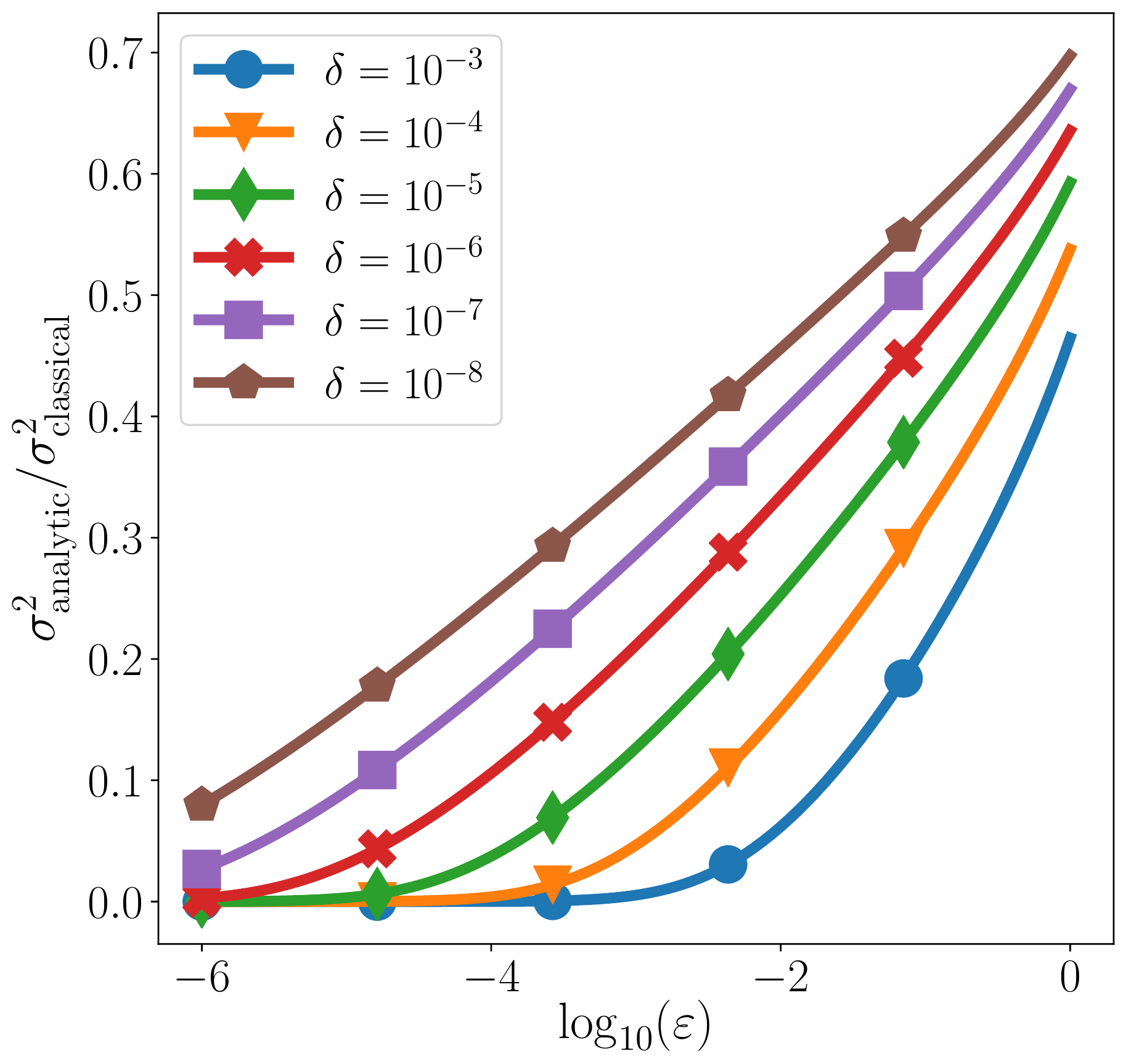}
\end{subfigure}
\begin{subfigure}[b]{0.244\textwidth}
\includegraphics[width=\textwidth]{./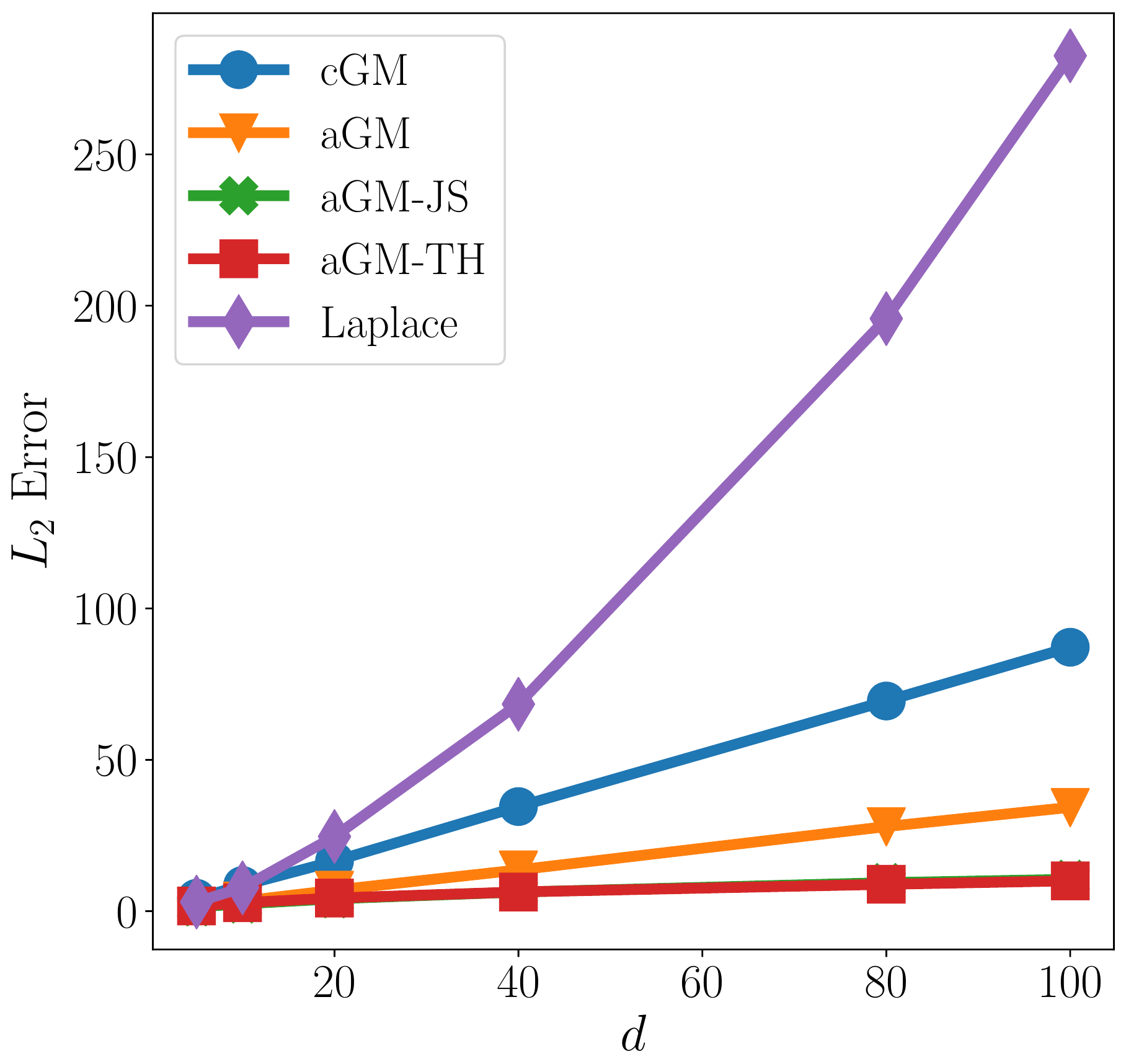}
\end{subfigure}
\begin{subfigure}[b]{0.244\textwidth}
\includegraphics[width=\textwidth]{./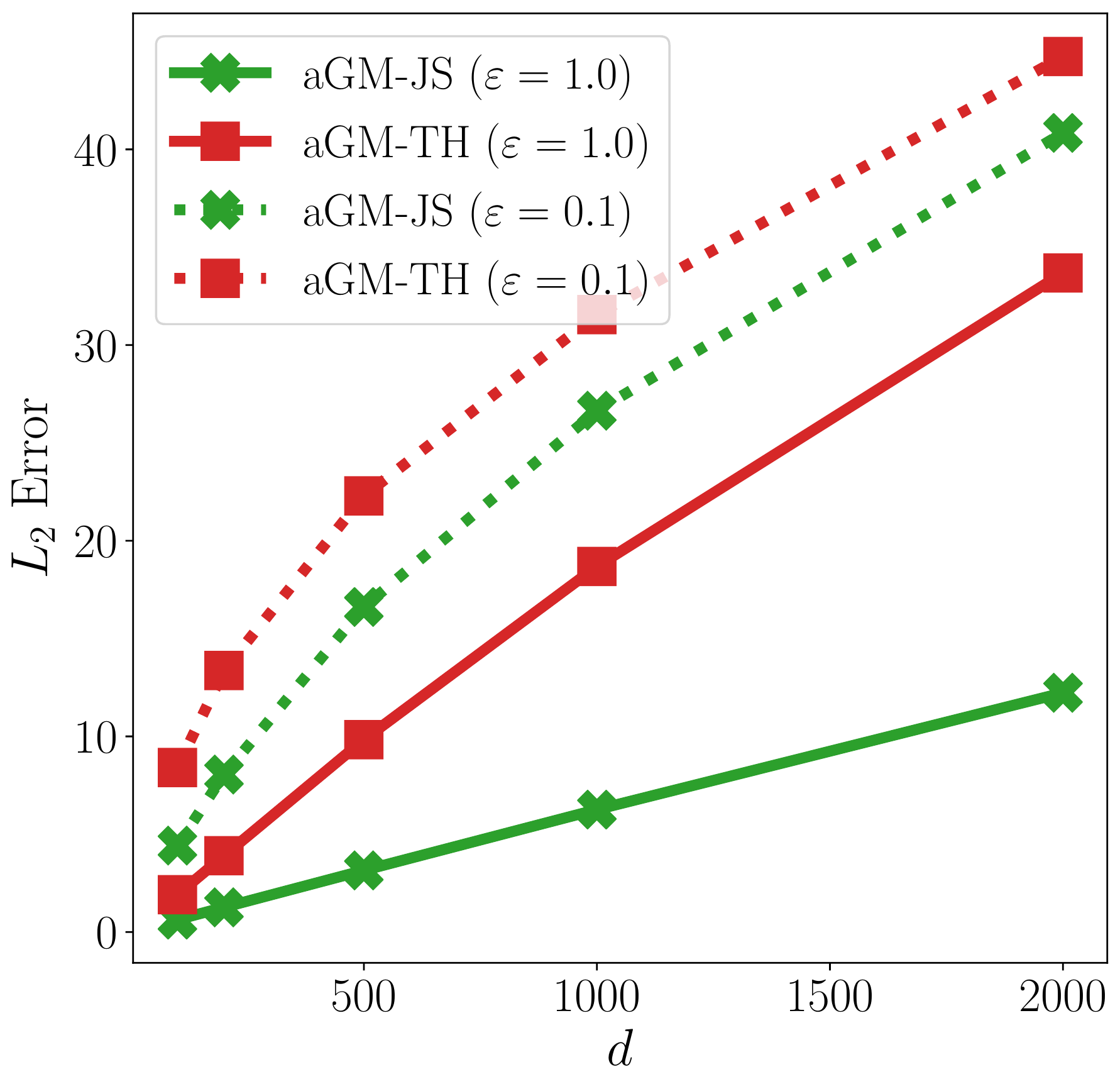}
\end{subfigure}
\caption{Two leftmost plots: Experiments comparing the classical Gaussian mechanism (cGM) and the analytic Gaussian mechanism (aGM), in terms of their absolute standard deviations as $\varepsilon \to 0$, and in terms of the gain in variance as a function of $\varepsilon$. Two rightmost plots: Mean estimation experiments showing $L_2$ error between the private mean estimate and the non-private empirical mean as a function of the dimension $d$.} \label{fig:exp-synth}
\end{center}
\end{figure*}

We implemented Algorithm~\ref{alg:newGM} in Python\footnote{See \url{https://github.com/BorjaBalle/analytic-gaussian-mechanism}.} and ran experiments to compare the variance of the perturbation obtained with the analytic Gaussian mechanism versus the variance required by the classical Gaussian mechanism. In all our experiments the values of $v^*$ and $u^*$ were solved up to an accuracy of $10^{-12}$ using binary search and the implementation of the $\mathsf{erf}$ function provided by SciPy \cite{scipy}.

The results are presented in the two leftmost panels in Figure~\ref{fig:exp-synth}. The plots show that as $\varepsilon \to 0$ the optimally calibrated perturbation outperforms the classical mechanism by several orders of magnitude.
Furthermore, we see that even for values of $\varepsilon$ close to $1$ our mechanism reduces the variance by a factor of $1.4$ or more, with higher improvements for larger values of $\delta$.

\subsection{Denoising for Mean Estimation}\label{sec:expmean}

Our next experiment evaluates the utility of denoising combined with the analytical Gaussian mechanism for the task of private mean estimation. The input to the mechanism is a dataset $x = (x_1, \ldots, x_n)$ containing $n$ vectors $x_i \in \R^d$ and the underlying deterministic functionality is $f(x) = (1/n) \sum_{i = 1}^n x_i$. This relatively simple task is a classic example from the family of \emph{linear queries} which are frequently considered in the differential privacy literature. We compare the accuracy of several mechanisms $M$ for releasing a private version of $f(x)$ in terms of the Euclidean distance $\norm{M(x) - f(x)}_2$. In particular, we test the analytical Gaussian mechanism with either James-Stein denoising cf.\ Theorem~\ref{thm:js} (aGM-JS) or optimal thresholding denoising cf.\ Theorem~\ref{thm:don} (aGM-TH), as well as several baselines including:  the classical Gaussian mechanism (cGM), the analytical Gaussian mechanism without denoising (aGM), and the Laplace mechanism (Lap) using the same $\varepsilon$ parameter as the Gaussian mechanisms.

To provide a thorough comparison we explore of the different parameters of the problem on the final utility. The key parameters of the problem are the dimension $d$ and the DP parameters $\varepsilon$ and $\delta$. The dimension affects the utility through the bounds provided in Theorem~\ref{thm:js} and Theorem~\ref{thm:don}. The DP parameters affect the utility through the variance $\sigma^2$ of the mechanism, which is also affected by the sample size $n$ via the global sensitivity. Thus, we can characterize the effect of $\sigma^2$ by keeping $n$ fixed and changing the DP parameters. In our experiments we consider a fixed sample size $n = 500$ and privacy parameter $\delta = 10^{-4}$ while trying several values for $\varepsilon$.

The other parameter that affects the utility is the ``size'' of $f(x)$, controlled either through the variance $w^2$ or the norm ball $S$. Since the denoising estimators we use are adaptive to these parameters and do not need to know them in advance, we sample the dataset $x$ repeatedly to obtain a diversity of values for $f(x)$. Each dataset $x$ is sampled as follows: first sample a center $x_0 \sim \cN(0,I)$ and then build $x = (x_1,\ldots,x_n)$ with $x_i = x_0 + \xi_i$, where each $\xi_i$ is i.i.d.\ with independent coordinates sampled uniformly from the interval $[-1/2,1/2]$. Thus, in each dataset the points $x_i$ all lie in an $L_{\infty}$-ball of radius $1$, leading to a global $L_2$ sensitivity $\Delta_2 = \sqrt{d}/n$ and a global $L_1$ sensitivity $\Delta_1 = d/n$. These are used to calibrate the Gaussian and Laplace perturbations, respectively.

The results are presented in two rightmost panels of Figure~\ref{fig:exp-synth}. Each point in every plot is the result of averaging the error over $100$ repetitions with different datasets. The first plot uses $\varepsilon = 0.01$ and shows how denoised methods improve the accuracy over all the other methods, sometimes by orders of magnitude. The second plot shows that for this problem the James-Stein estimator provides better accuracy in the high-dimensional setting.

\subsection{New York City Taxi Heat Maps}\label{sec:exptaxi}

\begin{figure*}
\captionsetup[subfigure]{justification=centering}
	\begin{subfigure}[t]{0.245\textwidth}
		\includegraphics[width=\textwidth]{./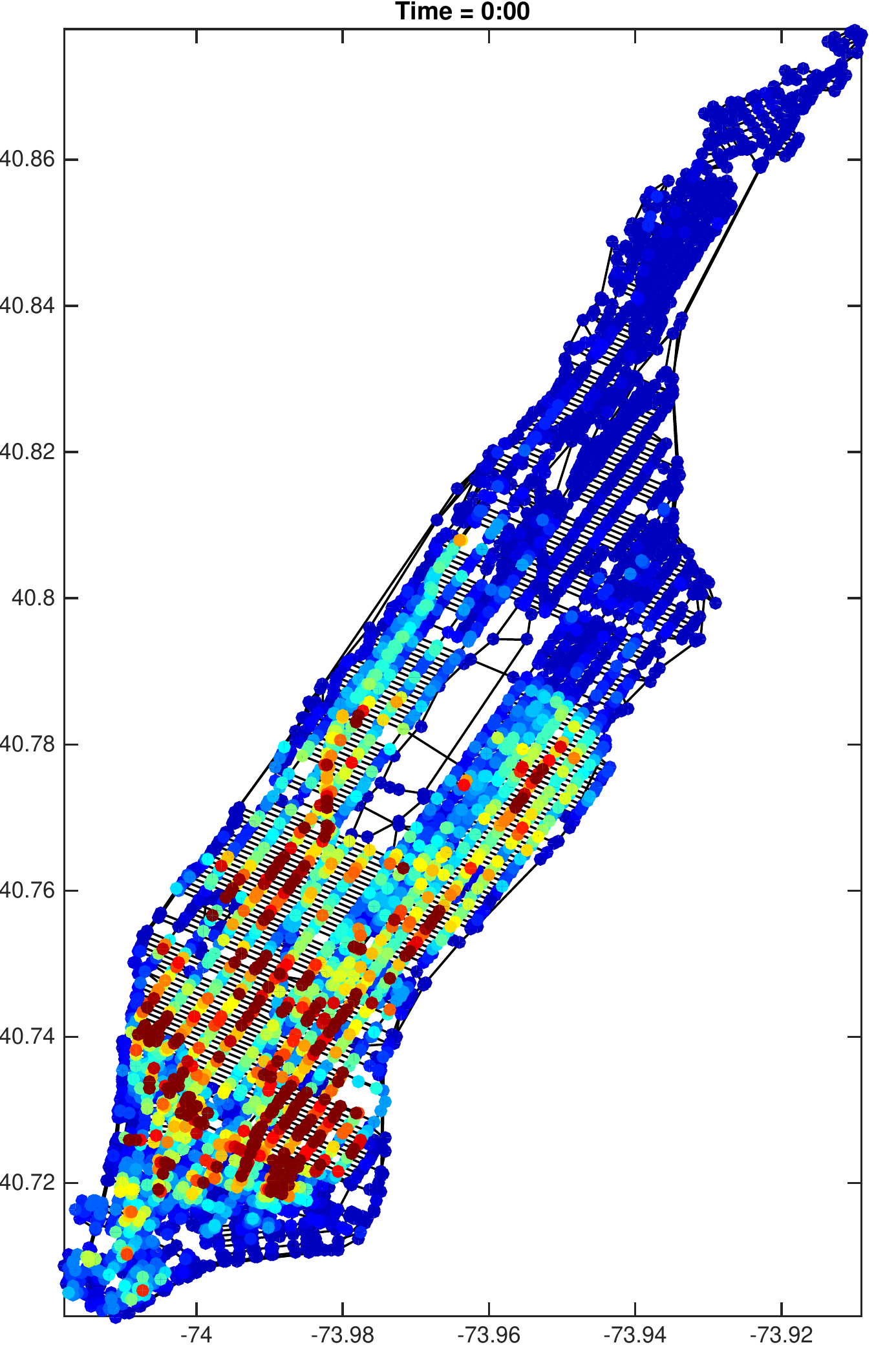}
		\caption*{Ground truth}
	\end{subfigure}
	\begin{subfigure}[t]{0.245\textwidth}
		\includegraphics[width=\textwidth]{./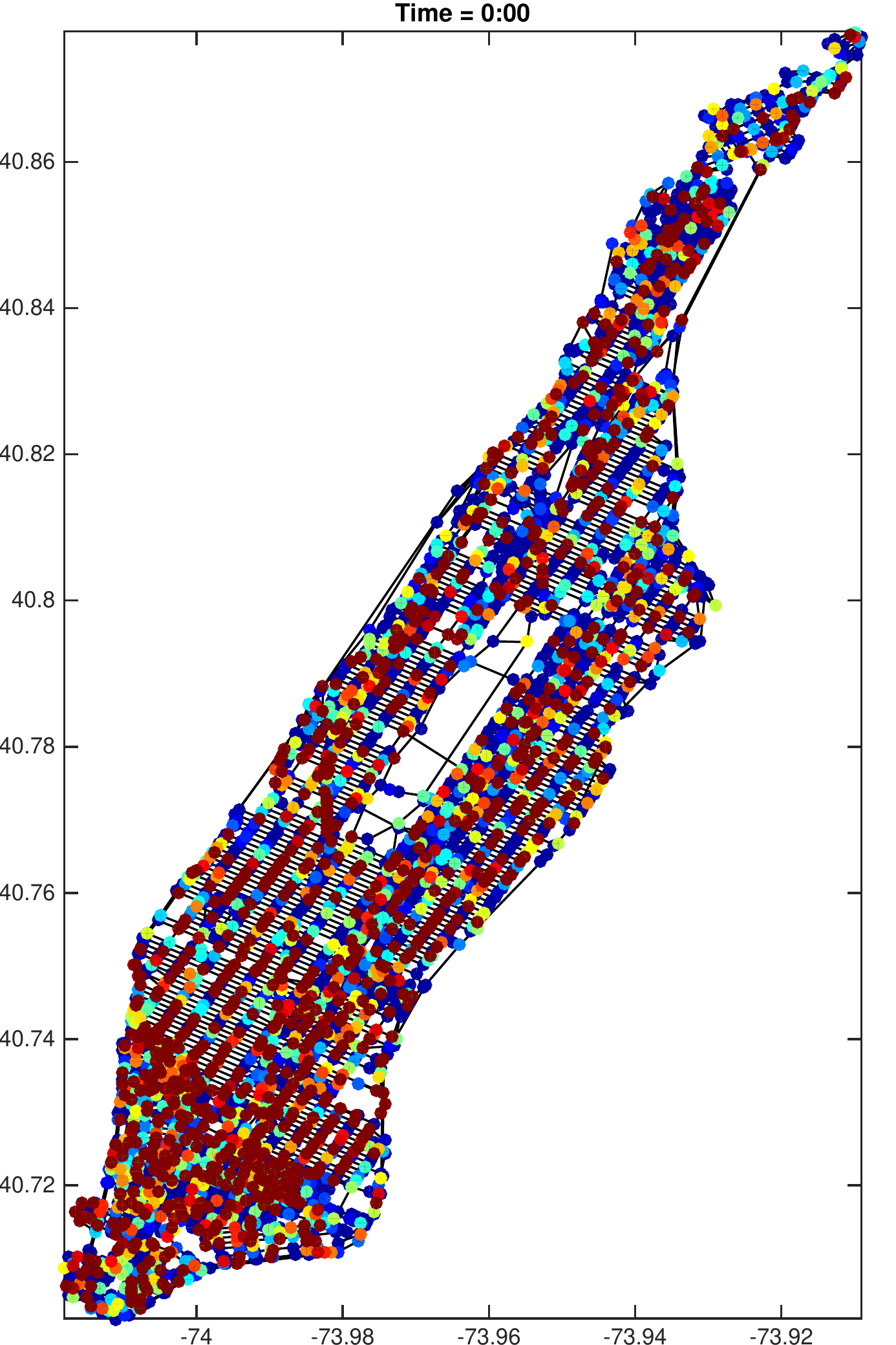}
		\caption*{Raw aGM\\ RMSE = 40.29}
	\end{subfigure}
	\begin{subfigure}[t]{0.245\textwidth}
		\includegraphics[width=\textwidth]{./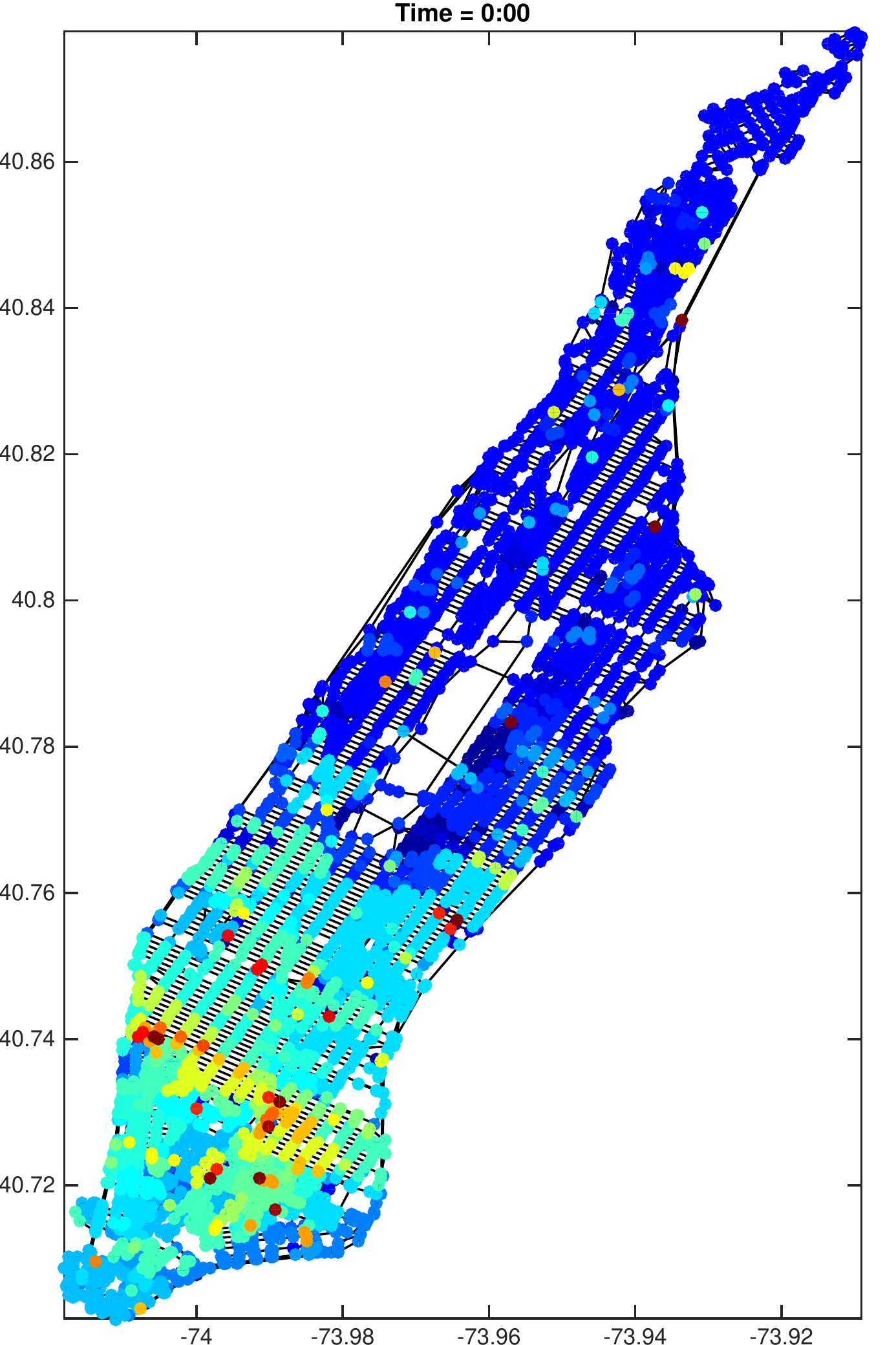}
		\caption*{Wavelets\\ RMSE = 12.28}
	\end{subfigure}
	\begin{subfigure}[t]{0.245\textwidth}
		\includegraphics[width=\textwidth]{./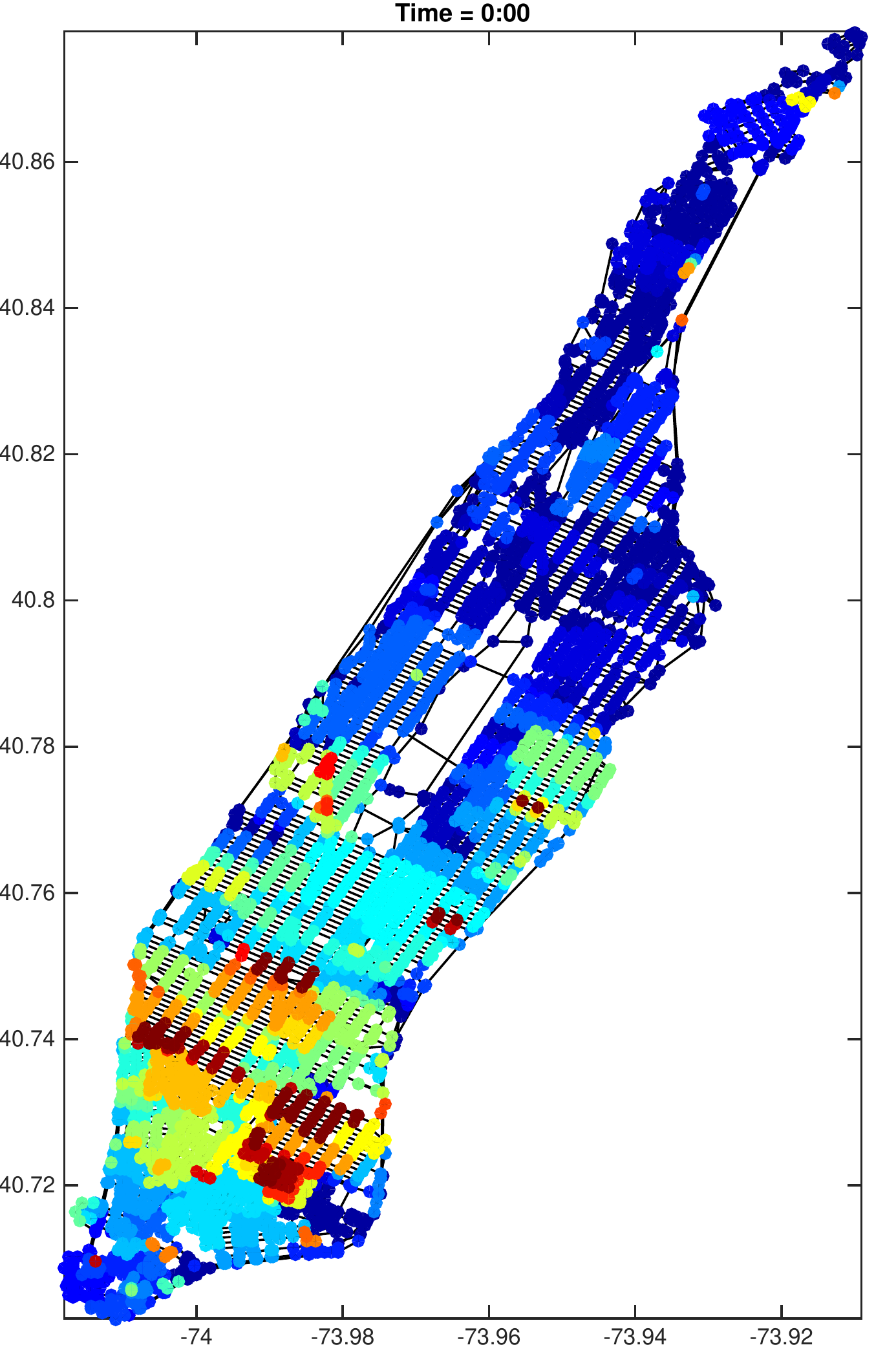}
		\caption*{Trend filtering\\ RMSE = 10.07}
	\end{subfigure}
	\caption{Illustration of the denoising in differentially private release of NYC taxi density during 00:00 - 01:00 am Sept 24, 2014. From left to right, the figures represent the true data, the output of the analytical Gaussian mechanism, the reconstructed signal from soft-thresholded wavelet coefficients with spanning-tree wavelet transform \citep{sharpnack2013detecting}, and the results of trend filtering on graphs \citep{wang2016}. We observe that adding appropriate post-processing significantly reduces the estimation error and also makes the underlying structures visible.
} \label{fig:illus_nyc_taxi}	
\end{figure*}

In this section, we apply our method to New York City taxi data. The dataset is a collection of time-stamped pick-ups and drop-offs of taxi drivers and we are interested in sharing a density map of such pick-ups and drop-offs in Manhattan at a specific time of a specific day under differential privacy.  

This is a problem of significant practical interest. Ever since the NYC Taxi \& Limousine Commission released this dataset, there has been multiple independent reports concerning the security and privacy risks this dataset poses for taxi drivers and their passengers \citep[see, e.g.,][]{taxi_rainbow,douriez2016anonymizing}. The techniques presented in this paper allow us to provably prevent individuals (on both the per-trip level and per-cab level) in the dataset from being identified, while remarkably, permitting the release of rich information about the data with fine-grained spatial and temporal resolution.

Specifically, we apply the analytical Gaussian mechanism to release the number of picks-ups and drop-offs at every traffic junction in Manhattan. There are a total of 3,784 such traffic junctions and they are connected by 7,070 sections of roads. We will treat them as nodes and edges on a graph. In the post-processing phase, we apply graph smoothing techniques to reveal the underlying signal despite the noise due to aGM. Specifically, we compare the JS-estimator and the soft-thresholding estimator we described in Section~\ref{sec:denoise}, as well as the same soft-thresholding estimator applied to the coefficients of a graph wavelet transform due to \citet{sharpnack2013detecting}. The basis transformation is important because the data might be sparser in the transformed domain. For reference, we also include the state-of-the-art graph smoothing techniques called graph trend filtering \citep{wang2016}, which has one additional tuning parameter but has been shown to perform significantly better than wavelet smoothing in practice. 

Our experiments provide cab-level differential privacy by assuming that every driver does a maximum of $5$ trips within an hour 
so that we have a global $L_2$-sensitivity of $\Delta = 5$.
This is a conservative but reasonable estimate and can be enforced by preprocessing the data.
Data within each hour is gathered and distributed to each traffic junction using a kernel density estimator; further details are documented in \citet{doraiswamy2014using}. 

We present some qualitative comparisons in Figure~\ref{fig:illus_nyc_taxi}, where we visualize the privately released heat map with and without post-processing.
Relatively speaking, trend filtering performs better than wavelet smoothing, but both approaches significantly improves the RMSE over the DP release without post-processing.
The results in Appendix~\ref{app:exp} provide quantitative results by comparing the mean square error of cGM, aGM as well as the aforementioned denoising techniques for data corresponding to different time intervals.

\section{Conclusion and Discussion}
In this paper, we embark on a journey of pushing the utility limit of Gaussian mechanism for $(\varepsilon,\delta)$-differential privacy. We propose a novel method to obtain the optimal calibration of Gaussian perturbations required to attain a given DP guarantee. We also review decades of research in statistical estimation theory and show that combining these techniques with differential privacy one obtains powerful \emph{adaptivity} that denoises differentially private outputs nearly optimally without additional hyperparameters. On synthetic data and on the New York City Taxi dataset we illustrate a significant gain in estimation error and fine-grained spatial-temporal resolution.

There are a number of theoretical problems of interest for future work. First, on the problem of differentially private estimation. Our post-processing approach effectively restricts our choice of algorithms to the composition of privacy release and post-processing. While we now know that we are optimal in both components, it is unclear whether we lose anything relative to the best differentially private algorithms.
Secondly, the analytical calibration proposed in this paper is optimal for achieving $(\varepsilon,\delta)$-DP with Gaussian noise. But when building complex mechanisms we are stuck in the dilemma of choosing between (a) using the aGM with the advanced composition \citep{kairouz2015composition}; or, (b) using R{\'e}nyi DP \citep{mironov2017renyi} or zCDP  \citep{bun2016concentrated} for tighter composition and calculate the $(\varepsilon,\delta)$ from moment bounds.
While (a) is tighter in the calculation the privacy parameters of each intermediate value, (b) is tighter in the composition but cannot take advantage of aGM.
It would be interesting if we could get the best of both worlds.

\section*{Acknowledgments}
We thank \citet{doraiswamy2014using} for sharing their preprocessed NYC taxi dataset, the anonymous reviewers for helpful comments that led to improvements of the paper and Stephen E. Fienberg for discussions that inspired the authors to think about optimal post-processing.

\bibliography{paper}
\bibliographystyle{plainnat}

\clearpage
\newpage
\appendix

\section{Proofs}\label{app:proofs}
In this appendix we present supporting proofs for all the results mentioned in the main text.

\subsection{Proofs from Section~\ref{sec:limitations}}

\begin{proof}[Proof of Theorem~\ref{thm:TV}]
An simple way to see that $(0,\delta)$-DP is achievable with Gaussian noise is to recall that $(0,\delta)$-DP is equivalent to a bound of $\delta$ on the total variation (TV) distance between the output distributions of $M(x)$ and $M(x')$ for any neighbouring pair $x \simeq x'$. If $M(x)$ is an output perturbation mechanism for $f(x)$ with noise $Z \sim \cN(0,\sigma^2 I)$, then using Pinsker's inequality we have
\begin{align*}
\mathsf{TV}(M(x),M(x')) &\leq \sqrt{\frac{\mathsf{KL}(M(x) | M(x'))}{2}} \\
&=\sqrt{\frac{\mathsf{KL}(\cN(f(x),\sigma^2 I) | \cN(f(x'),\sigma^2 I))}{2}} \\
&= \frac{\norm{f(x) - f(x')}}{2 \sigma} \leq \frac{\Delta}{2 \sigma} \enspace.
\end{align*}
Thus, we see that a Gaussian perturbation with standard deviation $\sigma = \Delta / 2 \delta$ is enough to achieve $(0,\delta)$-DP.
\end{proof}

\begin{proof}[Proof of Theorem~\ref{thm:lowerb}]
Note that the proof of Theorem~\ref{thm:aGM} shows that a Gaussian perturbation with $\sigma = \Delta / \sqrt{2 \varepsilon}$ yields a $(\varepsilon,\delta_0(\varepsilon))$-DP mechanism, where $\delta_0(\varepsilon) = \Phi(0) - e^{\varepsilon} \Phi(-\sqrt{2 \varepsilon})$. Thus, it is not possible to attain $(\varepsilon,\delta)$-DP with $\delta < \delta_0(\varepsilon)$ without increasing the variance of the perturbation.

The result follows by showing that the upper bound for $\delta$ proposed in Theorem~\ref{thm:lowerb} is a lower bound for $\delta_0(\varepsilon)$. Since $\Phi(0) = 1/2$, all we need to show is $e^{\varepsilon} \Phi(-\sqrt{2 \varepsilon}) < \frac{e^{-3 \varepsilon}}{\sqrt{4 \pi \varepsilon}}$.

Let $\Phi^{c}(t) = \Pr[\cN(0,1) \geq t] = 1 - \Phi(t)$ be the complementary of the standard Gaussian CDF. The Mill's ratio for the Gaussian distribution is the quantity $r(t) = \sqrt{2\pi} e^{t^2/2} \Phi^{c}(t)$. Bounding the Mill's ratio is a standard approach to approximate the tail of the Gaussian distribution. A well-known bound for the Mill's ratio is Gordon's inequality $r(t) < 1/t$ \cite{gordon1941values}. By using the symmetry $\Phi(-t) = \Phi^{c}(t)$ we obtain :
\begin{align*}
e^{\varepsilon} \Phi(-\sqrt{2 \varepsilon})
= e^{\varepsilon} \Phi^{c}(\sqrt{2 \varepsilon})
= \frac{e^{-3 \varepsilon}}{\sqrt{2 \pi}} r(\sqrt{2 \varepsilon})
< \frac{e^{-3 \varepsilon}}{\sqrt{4 \pi \varepsilon}} \enspace.
\end{align*}
\end{proof}

\begin{proof}[Proof of Lemma~\ref{lem:privlossGM}]
Recall that the density of the Gaussian output perturbation mechanism $M(x) = f(x) + Z$ with $Z \sim \cN(0,\sigma^2 I)$ is given by $p_{M(x)}(y) = \exp(-\norm{y - f(x)}^2/2 \sigma^2) / \sqrt{2 \pi \sigma^2}$. Plugging this expression into the definition of the privacy loss function and performing a quick computation we get
\begin{align*}
\ell_{M,x,x'}(y)
&=
\frac{\norm{y - f(x')}^2 - \norm{y - f(x)}^2}{2 \sigma^2} \\
&=
\frac{\norm{f(x) - f(x')}^2}{2 \sigma^2} + \frac{\langle y - f(x), f(x) - f(x') \rangle}{\sigma^2}
\enspace.
\end{align*}
To compute the privacy loss random variable $L_{M,x,x'}$ we need to plug $Y = f(x) + Z$ with $Z \sim \cN(0, \sigma^2 I)$ in the above inner product. By observing that $\langle Z, f(x) - f(x') \rangle \sim \cN(0, \sigma^2 \norm{f(x) - f(x')}^2)$ we obtain the distribution of the privacy loss random variable is given by
\begin{align*}
L_{M,x,x'} \sim \cN\left(\frac{\norm{f(x)-f(x')}^2}{2\sigma^2},\frac{\norm{f(x)-f(x')}^2}{\sigma^2}\right) \enspace.
\end{align*}
Therefore, the privacy loss of the Gaussian mechanism has the form $\cN(\eta, 2 \eta)$ for $\eta = D^2 / 2 \sigma^2$.
\end{proof}

\subsection{Proofs from Section~\ref{sec:aGM}}

\begin{proof}[Proof of Theorem~\ref{thm:necesuff}]
Given a pair of neighbouring datasets $x \simeq x'$ let $p = p_{M(x)}$ and $p' = p_{M(x')}$ be the densities of the output random variables $Y = M(x)$ and $Y' = M(x')$. Note that given an event $E \subseteq \Y$ one can rewrite \eqref{eqn:DP} as follows:
\begin{align}\label{eqn:DPevent}
\int_E (p(y) - e^\varepsilon p'(y)) \leq \delta \enspace.
\end{align}
Defining the event $E_* = \{ y : p(y) \geq e^\varepsilon p'(y) \}$ and its complementary $\bar{E}_* = \Y \setminus E_*$, we can partition $E$ into the sets $E_+ = E \cap E_*$ and $E_- = E \cap \bar{E}_*$. Therefore, by the definition of $E_*$ we have
\begin{align*}
\int_E (p(y) - e^\varepsilon p'(y)) &= \int_{E_+} (p(y) - e^\varepsilon p'(y)) \\ &\;\; + \int_{E_-} (p(y) - e^\varepsilon p'(y)) \\
&\leq \int_{E_+} (p(y) - e^\varepsilon p'(y)) \\
&\leq \int_{E_*} (p(y) - e^\varepsilon p'(y)) \enspace.
\end{align*}
Because \eqref{eqn:DPevent} has to hold for any event $E$ and the upper bound above holds for any event, we conclude that $M$ is $(\varepsilon,\delta)$-DP if and only if
\begin{align}\label{eqn:intEstar}
\int_{E_*} (p(y) - e^\varepsilon p'(y)) \leq \delta
\end{align}
holds for any $x \simeq x'$. To complete the proof we need to show that \eqref{eqn:intEstar} is equivalent to \eqref{eqn:necesuff}. Expanding the definition of $L_{M,x,x'}$ we get:
\begin{align*}
\Pr[L_{M,x,x'} \geq \varepsilon]
&=
\Pr[\log(p(Y) / p'(Y)) \geq \varepsilon] \\
&=
\Pr[p(Y) \geq e^{\varepsilon} p'(Y)] \\
&=
\int_{\Y} \one[p(y) \geq e^{\varepsilon} p'(y)] p(y) \\
&=
\int_{E_*} p(y) \enspace.
\end{align*}
A similar argument with $L_{M,x',x}$ also shows:
\begin{align*}
\Pr[L_{M,x',x} \leq -\varepsilon] = \int_{E_*} p'(y) \enspace.
\end{align*}
Putting the last two equations together we obtain see that the left hand side of \eqref{eqn:necesuff} equals the left hand side of \eqref{eqn:intEstar}.
\end{proof}

\begin{proof}[Proof of Lemma~\ref{lem:LtoCDF}]
Note that Lemma~\ref{lem:privlossGM} shows that the privacy loss random variables $L_{M,x,x'}$ and $L_{M,x'x}$ both follow the same distribution $\cN(\eta,2 \eta)$ with $\eta = D^2/2 \sigma^2$. This allows us to write the left hand side of \eqref{eqn:L1} in terms of the Gaussian CDF $\Phi$ as follows:
\begin{align*}
\Pr[L_{M,x,x'} \geq \varepsilon]
&=
\Pr[\cN(\eta, 2 \eta) \geq \varepsilon] \\
&=
\Pr\left[\cN(0,1) \geq \frac{-\eta + \varepsilon}{\sqrt{2 \eta}}\right] \\
&=
\Pr\left[\cN(0,1) \leq \frac{\eta - \varepsilon}{\sqrt{2 \eta}}\right] \\
&=
\Phi\left(\sqrt{\frac{\eta}{2}} - \frac{\varepsilon}{\sqrt{2 \eta}}\right) \\
&=
\Phi\left(\frac{D}{2 \sigma} - \frac{ \varepsilon \sigma}{D}\right) \enspace,
\end{align*}
where we used $\cN(\eta,2\eta) = \eta + \cN(0,1)/\sqrt{2 \eta}$ and the symmetry $\Pr[\cN(0,1) \geq t] = \Pr[\cN(0,1) \leq - t]$ of the distribution of the Gaussian distribution around its mean. A similar argument applied to the left hand side of \eqref{eqn:L2} yields:
\begin{align*}
\Pr[L_{M,x',x} \leq - \varepsilon]
&=
\Phi\left(- \frac{D}{2 \sigma} - \frac{ \varepsilon \sigma}{D}\right) \enspace.
\end{align*}
\end{proof}

\begin{proof}[Proof of Lemma~\ref{lem:monotone}]
We prove the result by using Leibniz's rule for differentiation under the integral sign to show that the function of interest has non-negative derivatives. First note that from the derivation of \eqref{eqn:L1} we have
\begin{align*}
\Pr[\cN(\eta,2\eta) \geq \varepsilon]
&=
\Phi(a(\eta))
=
\frac{1}{\sqrt{2 \pi}} \int_{-\infty}^{a(\eta)} e^{-y^2/2} dy \enspace,
\end{align*}
where $a(\eta) = \sqrt{\eta/2} - \varepsilon/\sqrt{2 \eta}$. Now we can use Leibniz's rule to write
\begin{align*}
\frac{d}{d \eta} \int_{-\infty}^{a(\eta)} e^{-y^2/2} dy
&=
e^{-a(\eta)^2/2} a'(\eta) \\
&=
e^{-a(\eta)^2/2} \left(\frac{1}{\sqrt{8 \eta}} + \frac{\varepsilon}{\sqrt{8 \eta^3}}\right) \enspace.
\end{align*}
Similarly, for the second term in the function we have $\Pr[\cN(\eta,2\eta) \leq - \varepsilon] = \Phi(b(\eta))$ where $b(\eta) = -\sqrt{\eta/2} - \varepsilon/\sqrt{2 \eta}$. Using Leibniz's rule again we get
\begin{align*}
\frac{d}{d \eta} \int_{-\infty}^{b(\eta)} e^{-y^2/2} dy
&=
e^{-b(\eta)^2/2} \left(-\frac{1}{\sqrt{8 \eta}} + \frac{\varepsilon}{\sqrt{8 \eta^3}}\right) \enspace.
\end{align*}
Therefore, we see that the derivative of $h$ satisfies:
\begin{align*}
h'(\eta)
&=
\frac{1}{4 \sqrt{\pi \eta}} \left(e^{-a(\eta)^2/2} + e^{\varepsilon} e^{-b(\eta)^2/2}\right) \\
&\;\; +
\frac{\varepsilon}{4 \sqrt{\pi \eta^3}} \left(e^{-a(\eta)^2/2} - e^{\varepsilon} e^{-b(\eta)^2/2}\right) \\
&= \frac{1}{4 \sqrt{\pi \eta}} \left(e^{-a(\eta)^2/2} + e^{\varepsilon} e^{-b(\eta)^2/2}\right) \geq 0
\enspace,
\end{align*}
where we used that $a(\eta)^2 + 2 \varepsilon = b(\eta)^2$.
\end{proof}

\begin{proof}[Proof of Theorem~\ref{thm:aGM}]
Recall that the derivations in Section~\ref{sec:aGM} establish that in order to calibrate a Gaussian perturbation to achieve $(\varepsilon,\delta)$-DP all that is required is find the smallest $\sigma$ such that
\begin{align}
\Phi\left(\frac{\Delta}{2 \sigma} - \frac{\varepsilon \sigma}{\Delta}\right) - e^{\varepsilon} \Phi\left(-\frac{\Delta}{2 \sigma} - \frac{\varepsilon \sigma}{\Delta}\right) \leq \delta \enspace.
\label{eqn:Nbis}
\end{align}
To establish the correctness of the analytic Gaussian mechanism we begin by observing that the argument in the first term of \eqref{eqn:Nbis} changes sign at $\sigma = \Delta / \sqrt{2 \varepsilon}$, while the argument for the second terms is always negative. Thus, we substitute $\sigma = \alpha \Delta / \sqrt{2 \varepsilon}$ in the expression above and obtain:
\begin{align*}\label{eqn:B}
B_{\varepsilon}(\alpha) = \Phi\left(\sqrt{\frac{\varepsilon}{2}} \left(\frac{1}{\alpha} - \alpha\right)\right)
- e^{\varepsilon} \Phi\left(-\sqrt{\frac{\varepsilon}{2}} \left(\frac{1}{\alpha} + \alpha\right)\right)
\enspace.
\end{align*}
To solve the optimization $\inf\{ \alpha > 0 : B_{\varepsilon}(\alpha) \leq \delta\}$ using numerical evaluations of $\Phi$ it is convenient to consider the cases $\alpha \geq 1$ and $\alpha < 1$ separately.
In the case $\alpha \geq 1$ we define $u = (\alpha - 1/\alpha)^2/2$ and substitute the corresponding $\alpha$ in $B_{\varepsilon}$ to obtain
\begin{align*}
B_{\varepsilon}^-(u) = \Phi(-\sqrt{\varepsilon u})
- e^{\varepsilon} \Phi(-\sqrt{\varepsilon (u + 2)}) \enspace.
\end{align*}
Similarly, by taking $v = (1/\alpha - \alpha)^2/2$ in the case $\alpha < 1$ we obtain 
\begin{align*}
B_{\varepsilon}^+(v) = \Phi(\sqrt{\varepsilon v})
- e^{\varepsilon} \Phi(-\sqrt{\varepsilon (v + 2)}) \enspace.
\end{align*}
Note that, as expected, these definitions satisfy $\lim_{v \to \infty} B_{\varepsilon}^+(v) = 1$ and $\lim_{u \to \infty} B_{\varepsilon}^-(u) = 0$, since the limits correspond to $\lim_{\alpha \to 0} B(\alpha) = 1$ and $\lim_{\alpha \to \infty} B(\alpha) = 0$, respectively.  Furthermore, we have
\begin{align*}
B_{\varepsilon}^+(0) = B_{\varepsilon}^-(0) = \Phi(0) - e^{\varepsilon} \Phi(- \sqrt{2 \varepsilon}) = \delta_0(\varepsilon) \enspace,
\end{align*}
which corresponds to the privacy guarantee $(\varepsilon,\delta_0(\varepsilon))$-DP obtained by taking $\sigma = \Delta / \sqrt{2 \varepsilon}$; i.e.\ $\alpha = 1$.

These observations motivate the mechanism described in Algorithm~\ref{alg:newGM}. In particular, for $\delta \geq \delta_0(\varepsilon)$ we can achieve $(\varepsilon,\delta)$-DP with $\alpha < 1$, and the smallest $\alpha < 1$ such that $B_{\varepsilon}(\alpha) \leq \delta$ corresponds to the largest $v \geq 0$ such that $B_{\varepsilon}^+(v) \leq \delta$. Similarly, for $\delta < \delta_0(\varepsilon)$ we require $\alpha \geq 1$, and the smallest $\alpha \geq 1$ such that $B_{\varepsilon}(\alpha) \leq \delta$ corresponds to the smallest $u \geq 0$ such that $B_{\varepsilon}^-(u) \leq \delta$.
\end{proof}

\subsection{Proofs from Section~\ref{sec:denoise}}
The proofs in this section are well-known and not part of the contribution of the current paper. We include these proofs because they are short and revealing and we hope to be self-contained as much as possible.
\begin{proof}[Proof of Theorem~\ref{thm:bayes}]
	 Let $P$ be the distribution of $f(x)$ induced by  $x\sim \pi$.
	Let $\theta\in \R^d$, define its posterior error 
	$$r(\theta | \hat{y})   =  \int \|\theta -  f(x)\|^2  dP(f(x) | \hat{y}).$$
	Take the gradient with respect to $\theta$ on both sides and apply Fubini's theorem 
	\begin{align*}
	 &\frac{\partial }{\partial \theta}r(\theta | \hat{y})  = \frac{\partial }{\partial \theta}\int\|\theta -  f(x)\|^2  dP(f(x) | \hat{y})\\
	   =&  	2\theta - 2  \int  f(x) dP(f(x) | \hat{y})  = 2(\theta - \E[ f(x) | \hat{y}]).
	\end{align*}
	Assign the gradient to $0$ we get that the minimizer is $\E[ f(x) | \hat{y}]$.
	
	Now, assume $\tilde{y}_{\mathrm{Bayes}} $ is suboptimal, there exists $\tilde{y}^*\neq \tilde{y}_{\mathrm{Bayes}} $ such that
	\begin{align*}
	&\E\left[\|f(x) - \tilde{y}_{\mathrm{Bayes}}\|^2 \right]  > \E\left[\|f(x) - \tilde{y}^*\|^2\right] \\
	& = \E[  r(\tilde{y}^* | \hat{y})] \geq \E[ r(\tilde{y}_{\mathrm{Bayes}} )]  =  \E\left[\|f(x) - \tilde{y}_{\mathrm{Bayes}}\|^2\right].
	\end{align*}
	which is a contradiction.
\end{proof}

\begin{proof}[Proof of Theorem~\ref{thm:js}]
	Note that $\frac{\|\hat{y}\|^2}{w^2+\sigma^2}$ follows a $\chi^2$ distribution with degree of freedom $d$.  The likelihood function
	$$
	p(\|\hat{y}\|^2 |  w^2)   \propto   (\frac{\|\hat{y}\|^2}{w^2+\sigma^2})^{d/2-1}e^{-\frac{\|\hat{y}\|^2}{2(w^2+\sigma^2)}}.
	$$
	The gradient w.r.t. $w^2$ of the log-likelihood, we get
	$$
	-\frac{d/2-1}{w^2+\sigma^2} + \frac{\|y\|^2}{2(w^2+\sigma^2)^2}.
	$$
	Assigning it to $0$, we get the maximum likelihood estimate $w^2  =  \frac{\|y\|^2}{k-2} - \sigma^2.$
	Substituting it into $\tilde{y}_{\mathrm{Bayes}}   =  (w^2 / (w^2+\sigma^2)) \hat{y}$
	produces $\tilde{y}_{\mathrm{JS}} $ as stated and the calculation of its MSE is straightforward.
\end{proof}

\section{Additional Experiments}\label{app:exp}
\begin{figure*}[t]
\begin{center}
\begin{subfigure}[b]{0.245\textwidth}
\includegraphics[width=\textwidth]{./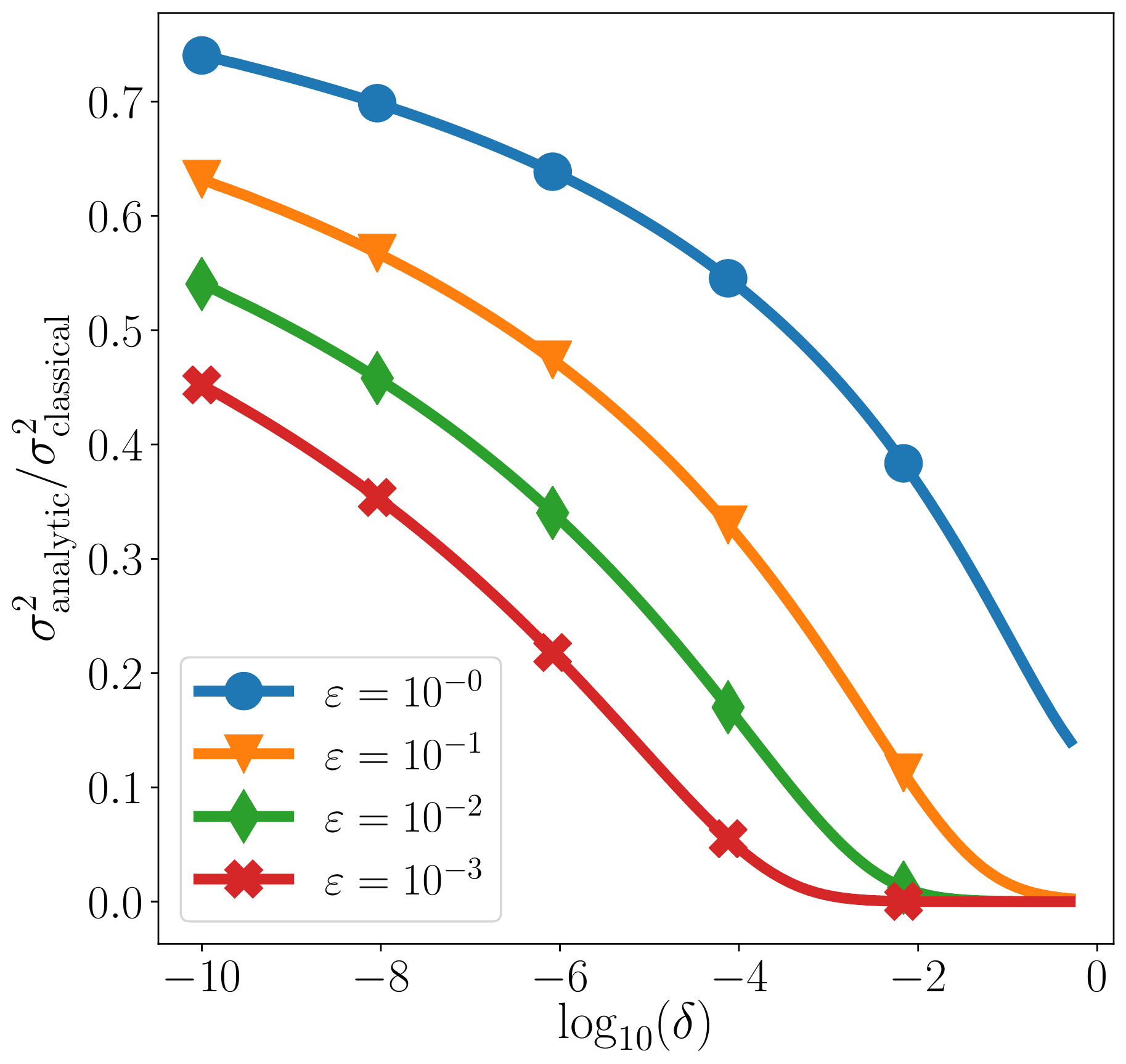}
\end{subfigure}
\begin{subfigure}[b]{0.245\textwidth}
\includegraphics[width=\textwidth]{./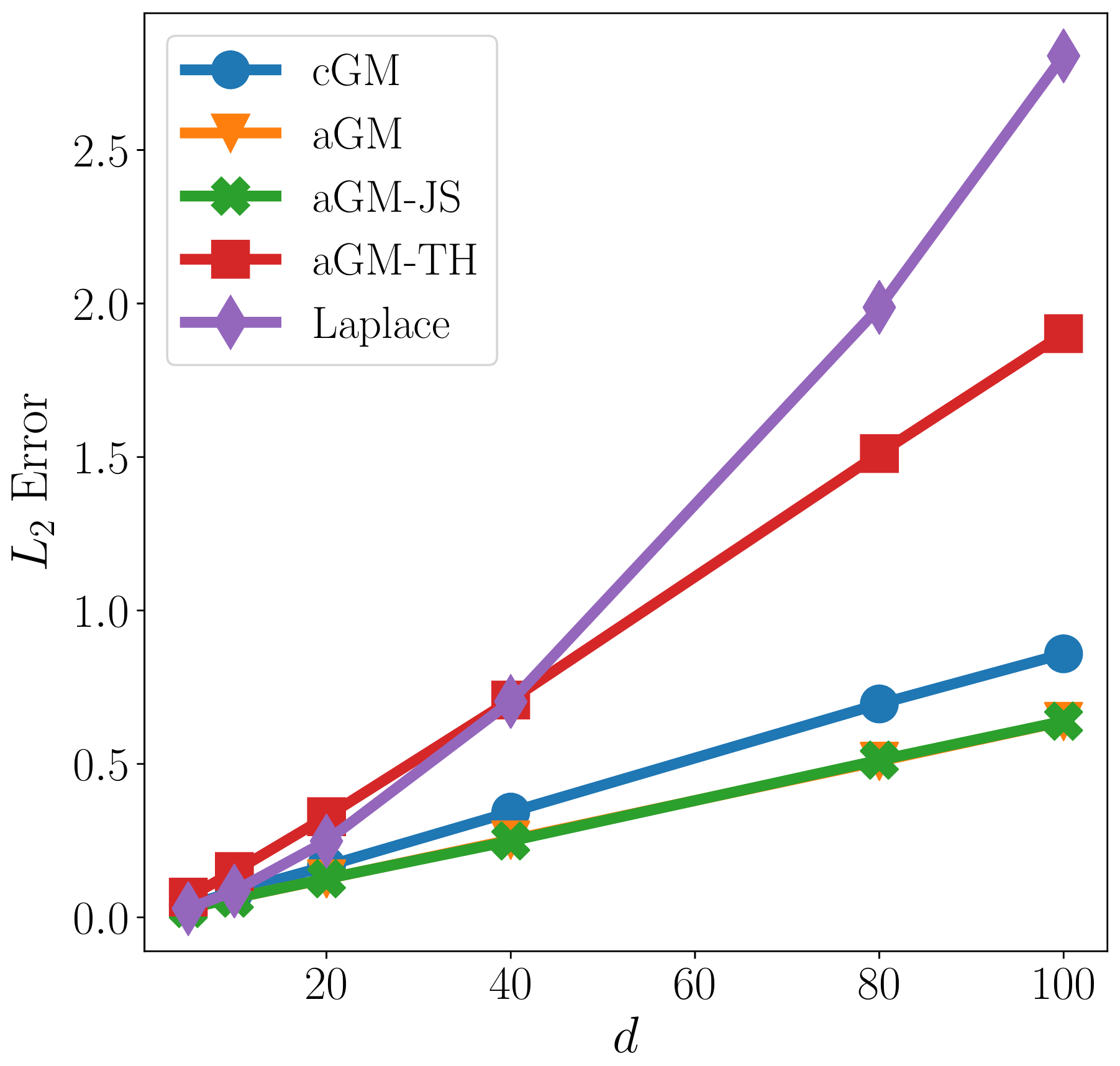}
\end{subfigure}
\begin{subfigure}[b]{0.245\textwidth}
\includegraphics[width=\textwidth]{./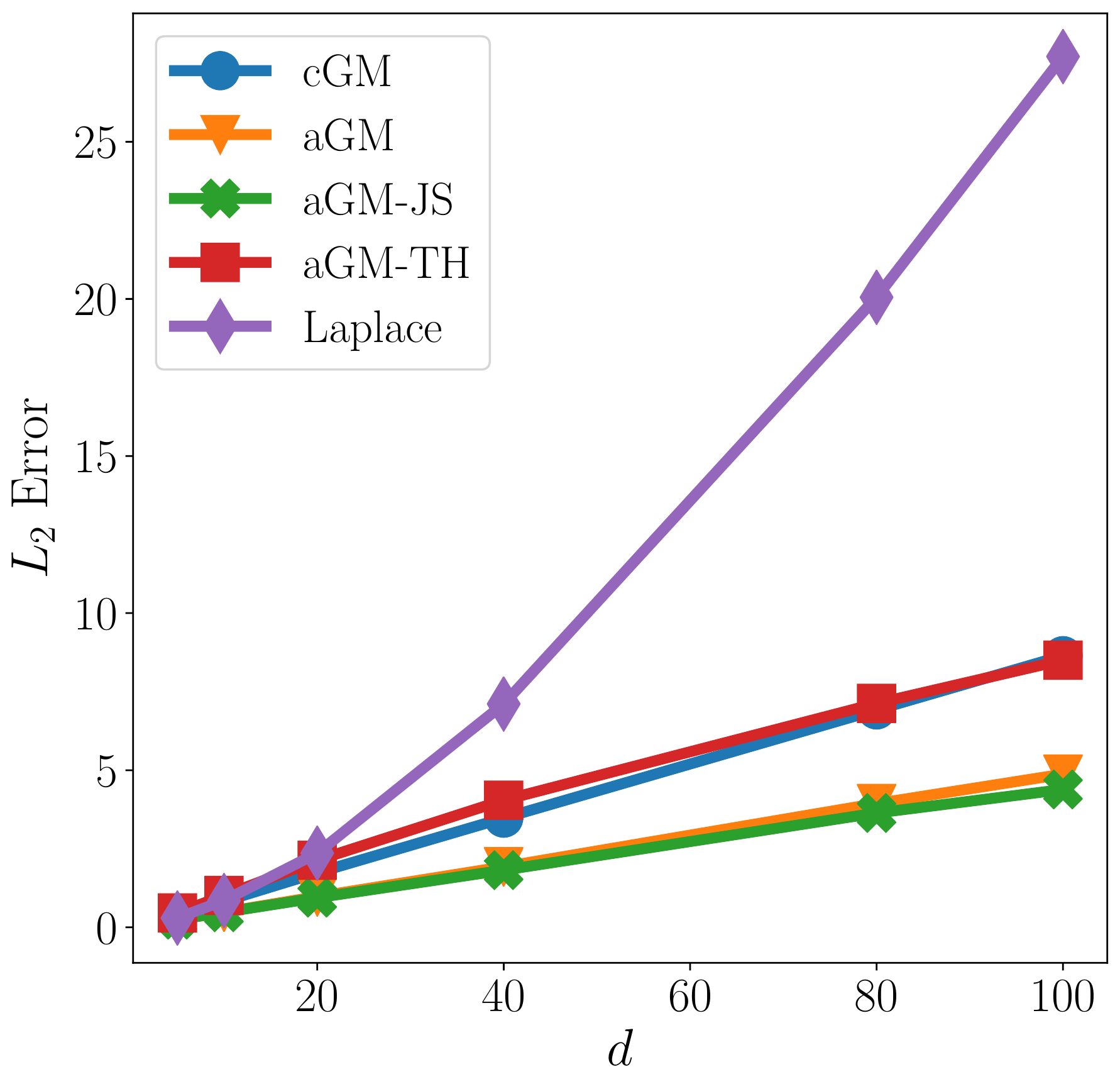}
\end{subfigure}
\caption{Left plot: Comparing the classical Gaussian mechanism (cGM) and the analytic Gaussian mechanism (aGM) in terms of gain in variance as a function of $\delta$. Two rightmost plots: Mean estimation experiments showing $L_2$ error between the private mean estimate and the non-private empirical mean as a function of the dimension $d$ with $\varepsilon = 1$ and $\varepsilon = 0.1$. Dataset size is fixed to $n = 500$ and privacy parameter is set to $\delta = 10^{-4}$.} \label{fig:exp-synth-extra}
\end{center}
\end{figure*}
\begin{figure*}[t]
\begin{center}
\begin{subfigure}[b]{0.244\textwidth}
\includegraphics[width=\textwidth]{./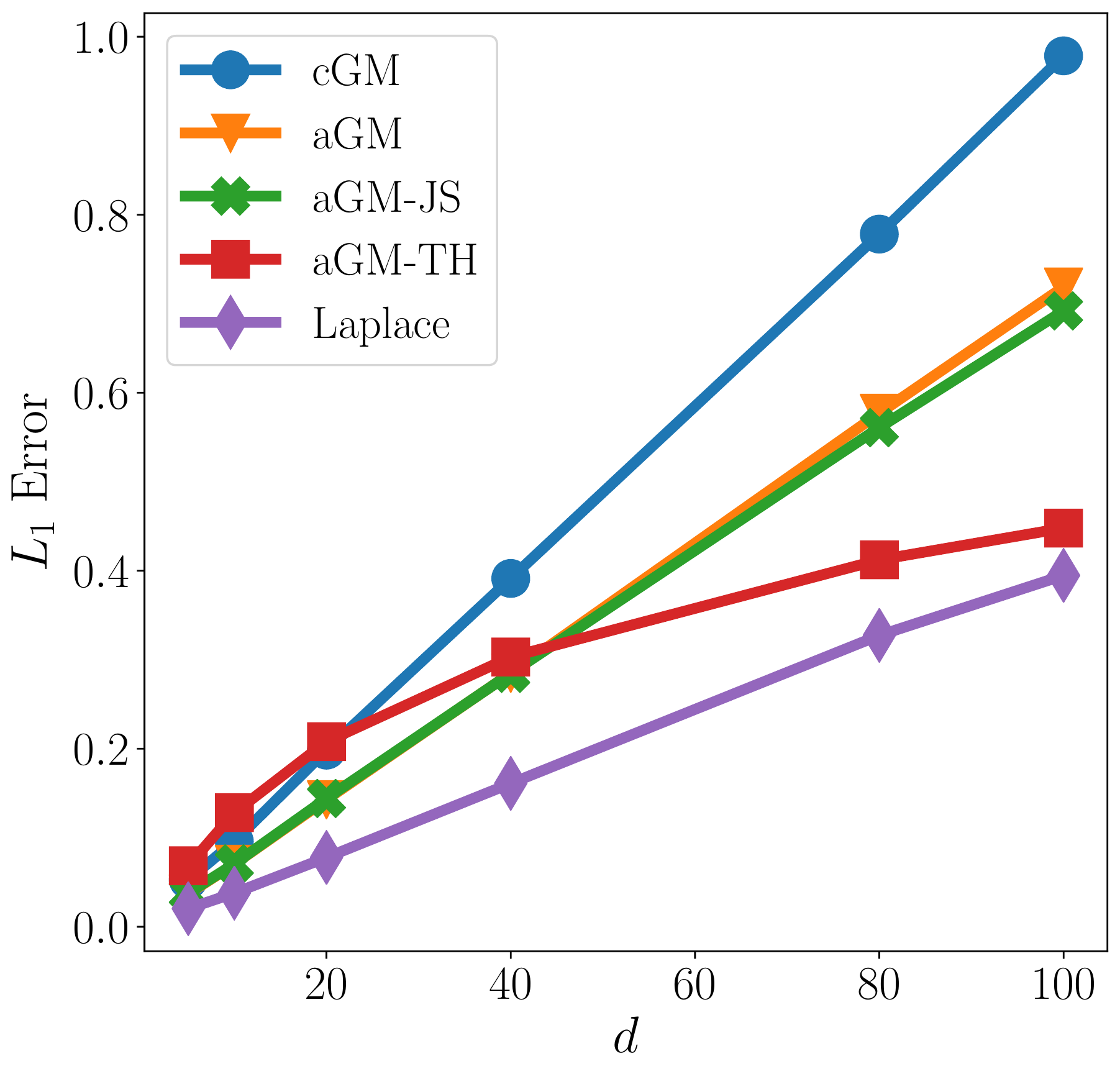}
\end{subfigure}
\begin{subfigure}[b]{0.244\textwidth}
\includegraphics[width=\textwidth]{./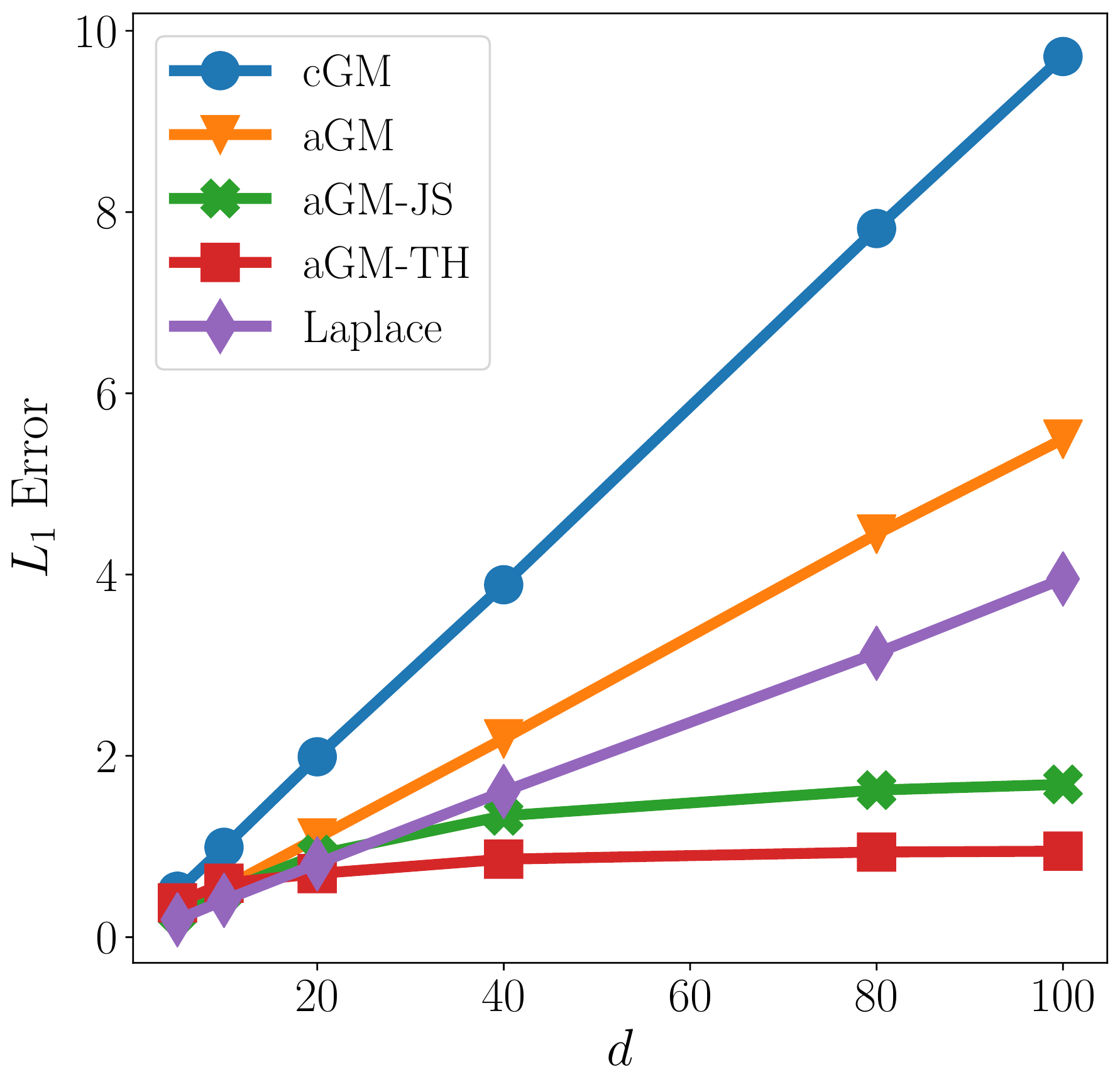}
\end{subfigure}
\begin{subfigure}[b]{0.244\textwidth}
\includegraphics[width=\textwidth]{./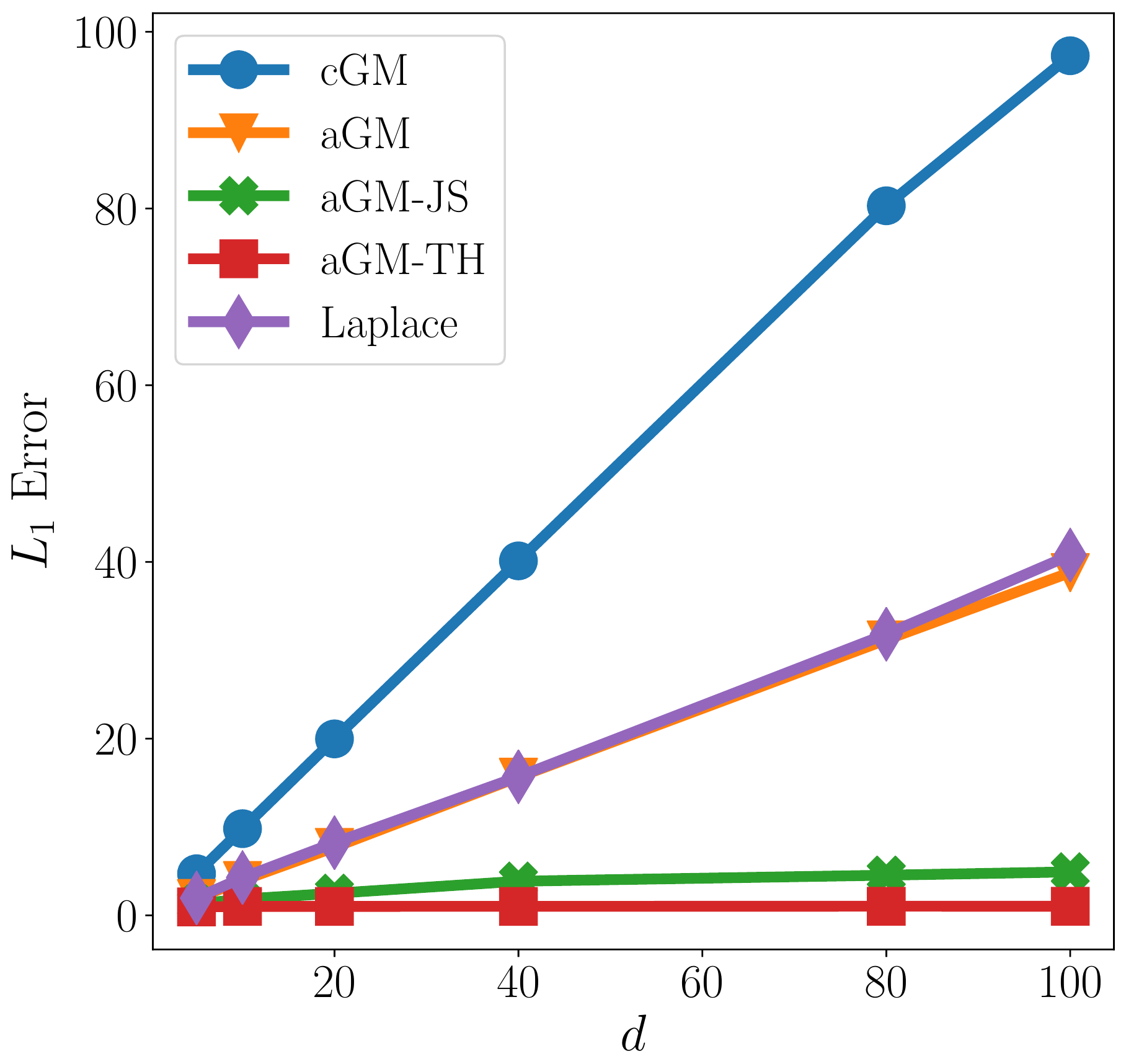}
\end{subfigure}
\begin{subfigure}[b]{0.244\textwidth}
\includegraphics[width=\textwidth]{./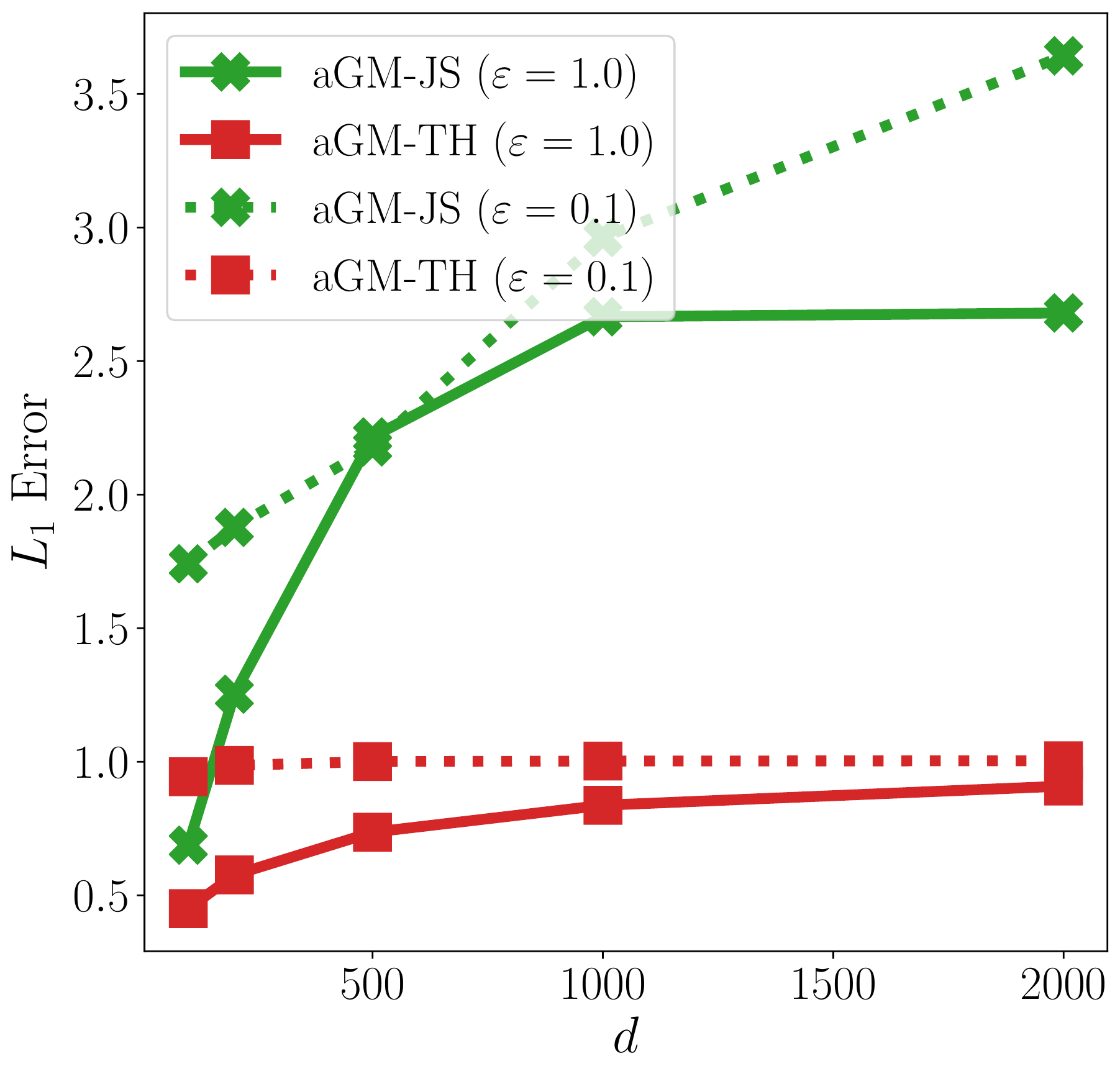}
\end{subfigure}
\caption{Histogram release experiments showing $L_1$ error between the private histogram and the non-private empirical histogram as a function of the dimension $d$. Dataset size is fixed to $n = 500$ and privacy parameter is set to $\delta = 10^{-4}$. The first three panels correspond to $\varepsilon = 1, 0.1, 0.01$ (left to right). The rightmost panel displays the two denoised mechanisms (aGM-JS and aGM-TH) in the high-dimensional case.}\label{fig:exp-hist}
\end{center}
\end{figure*}
\begin{figure*}
\captionsetup[subfigure]{justification=centering}
\begin{subfigure}[t]{0.245\textwidth}
\includegraphics[width=\textwidth]{./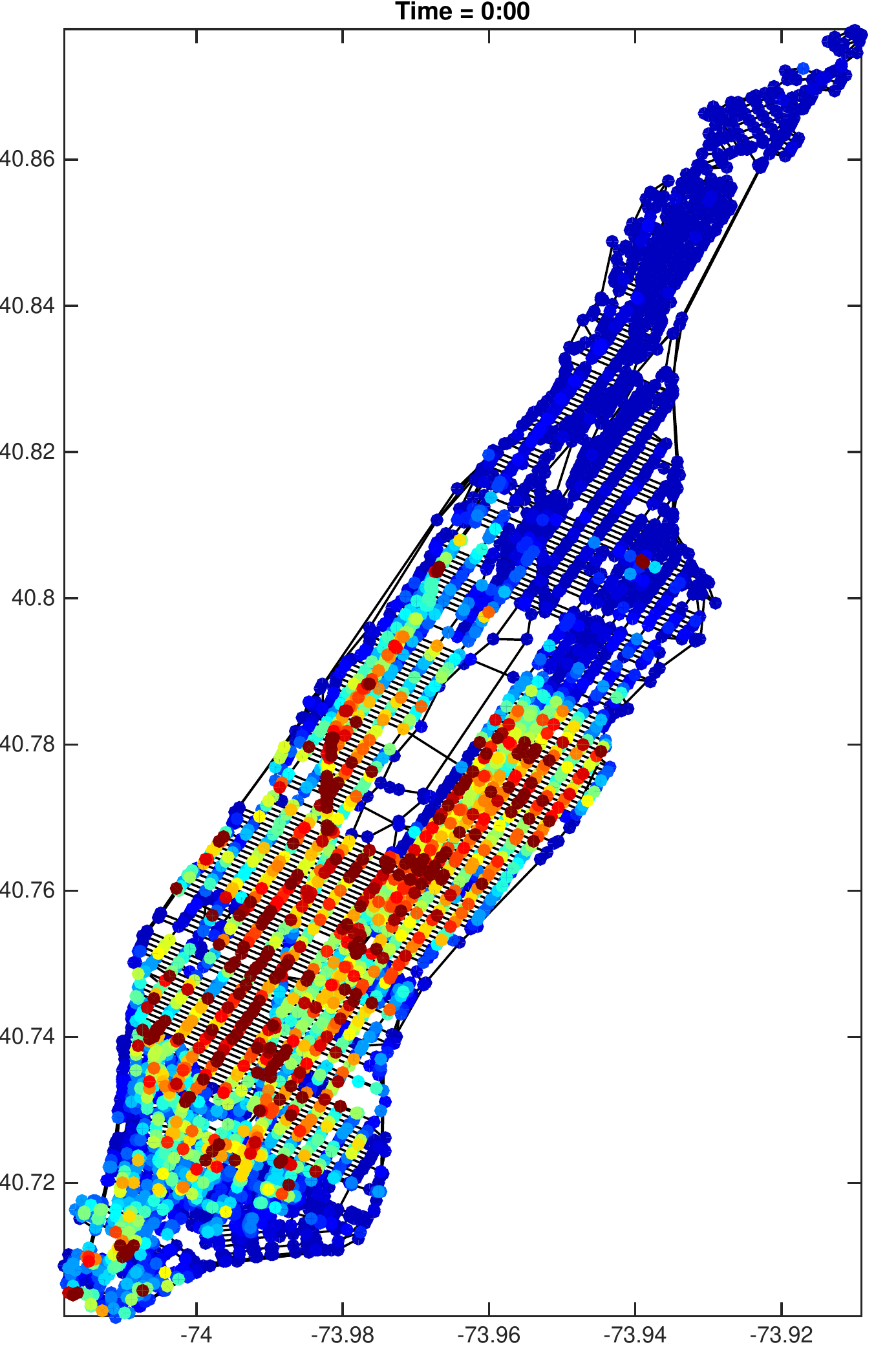}
\caption*{Ground truth}
\end{subfigure}
\begin{subfigure}[t]{0.245\textwidth}
\includegraphics[width=\textwidth]{./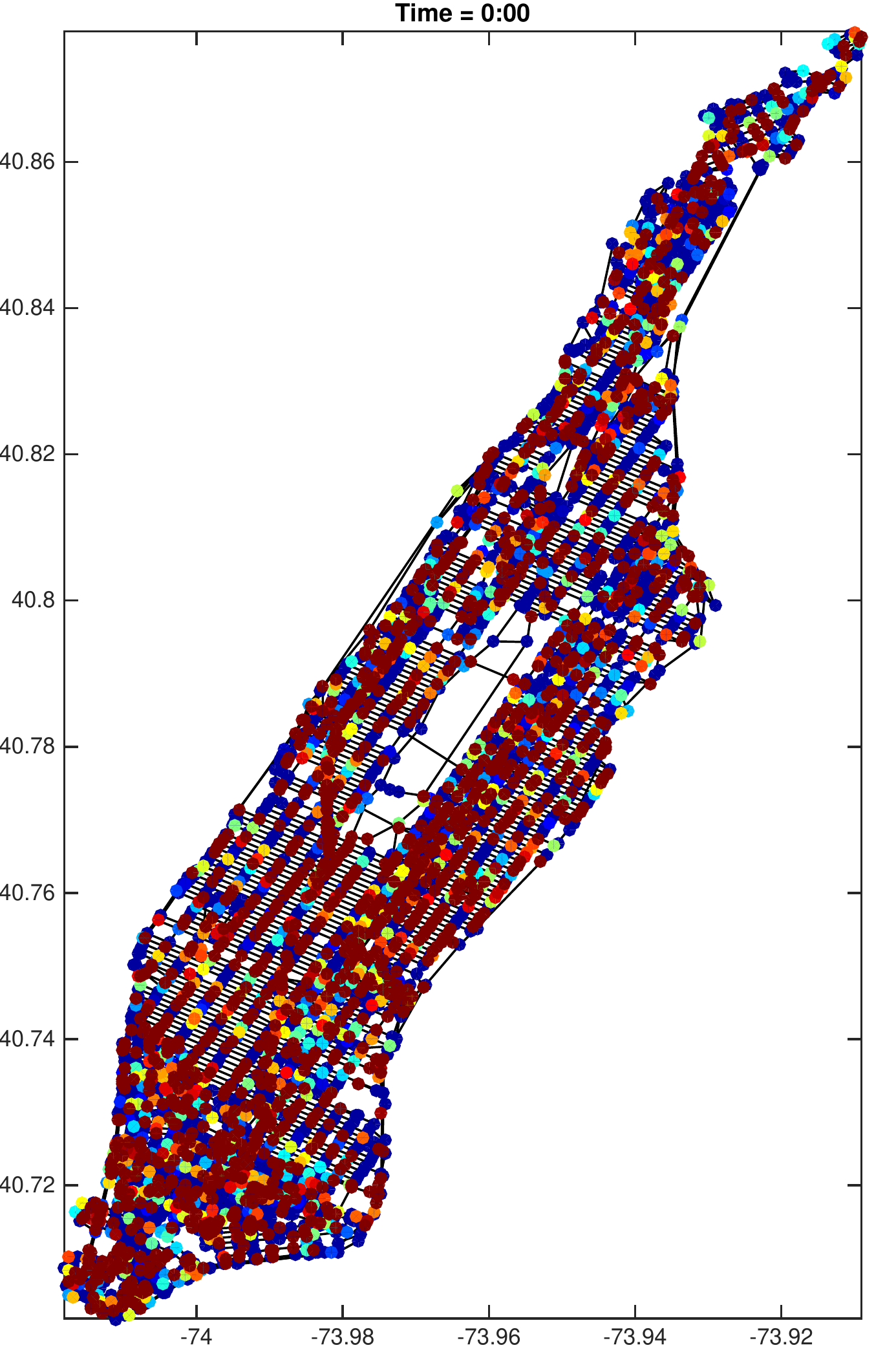}
\caption*{Raw aGM\\ RMSE = 40.29}
\end{subfigure}
\begin{subfigure}[t]{0.245\textwidth}
\includegraphics[width=\textwidth]{./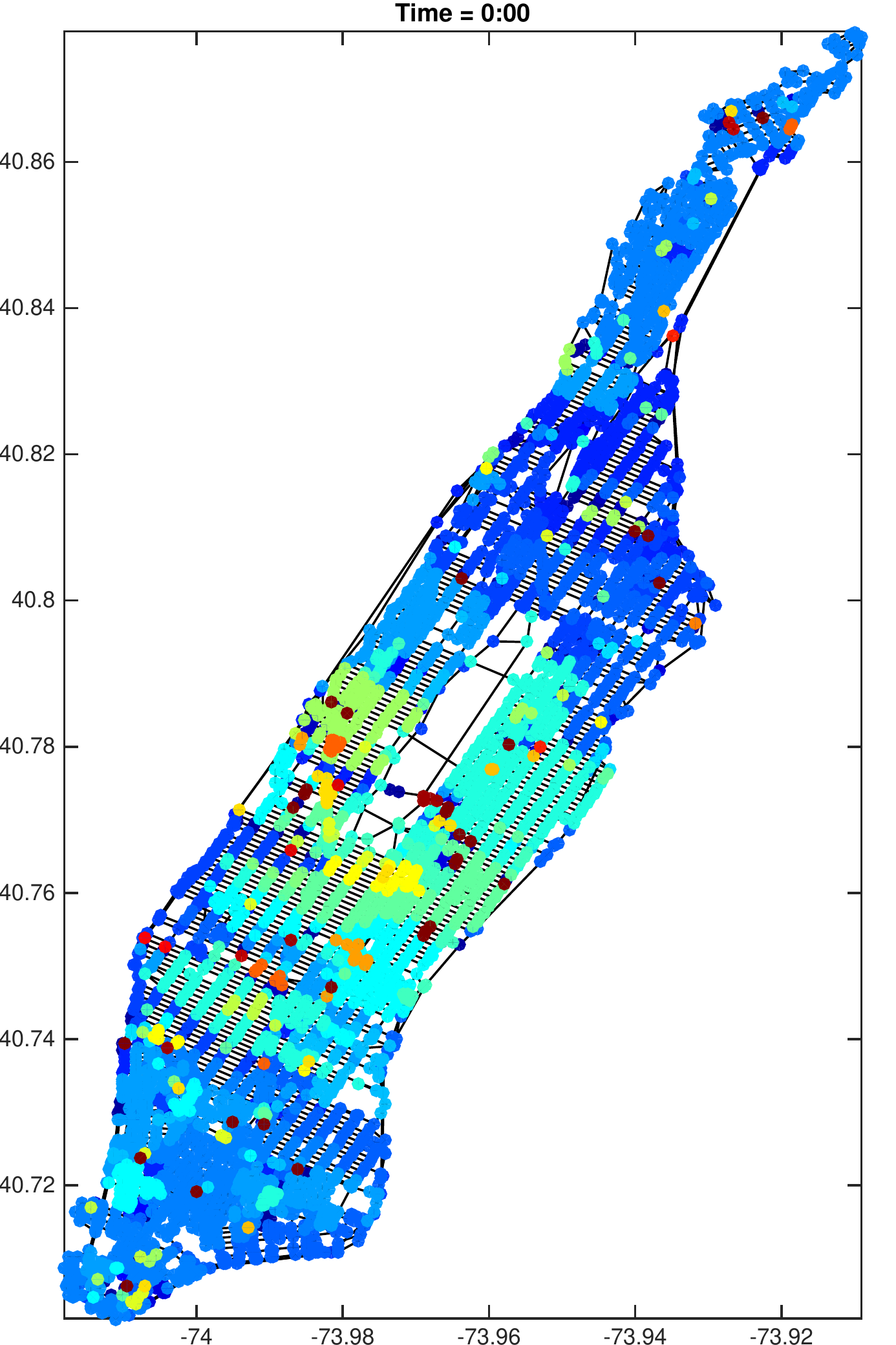}
\caption*{Wavelet\\ RMSE = 10.22}
\end{subfigure}
\begin{subfigure}[t]{0.245\textwidth}
\includegraphics[width=\textwidth]{./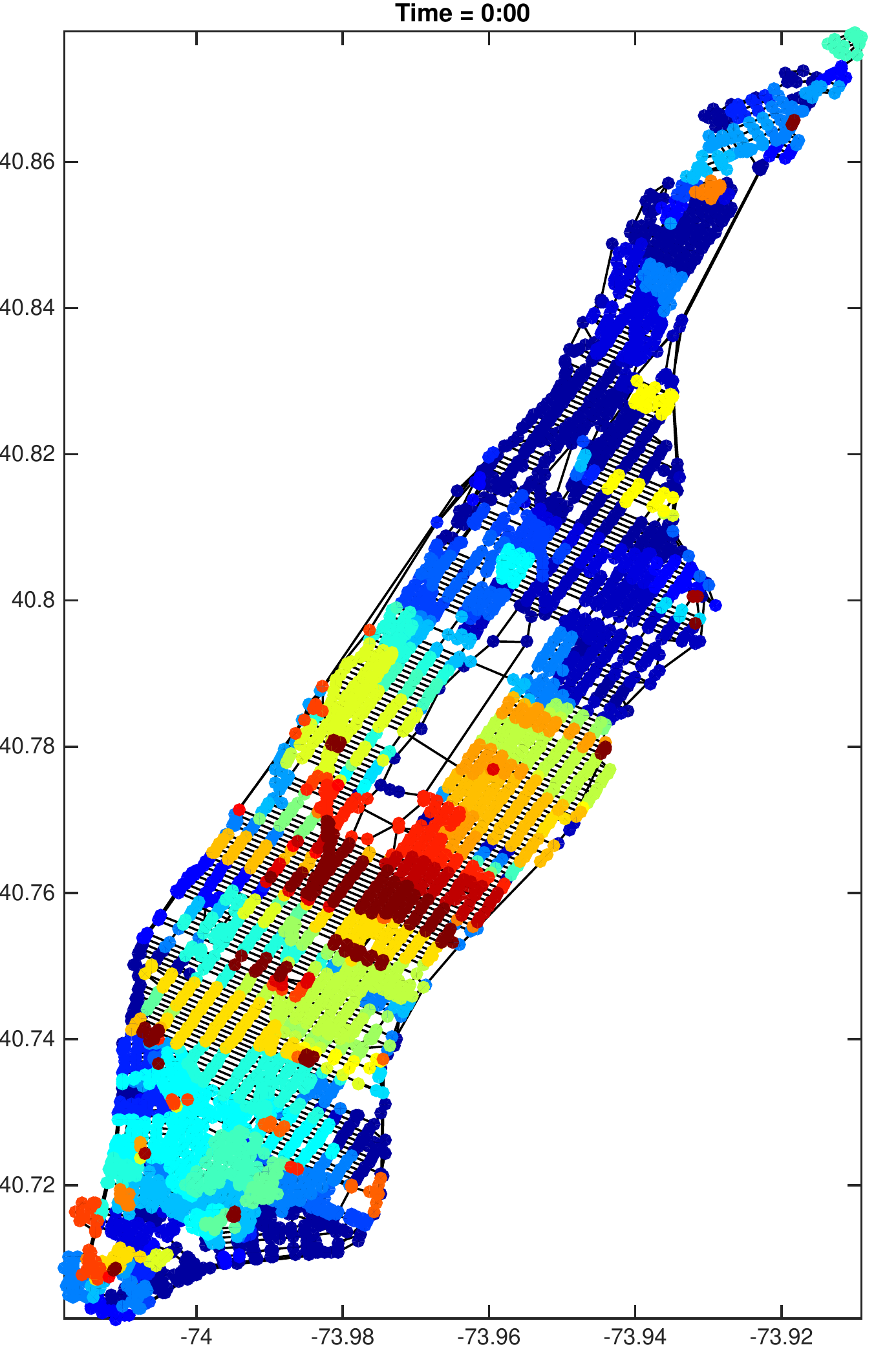}
\caption*{Trend filtering\\ RMSE = 9.46}
\end{subfigure}
\caption{Illustration of the denoising in differentially private release of NYC taxi density during 12:00 - 13:00 pm Sept 24, 2014. Comparing to the figure in the midnight of Figure~\ref{fig:illus_nyc_taxi}, the figures look structurally different. More activities center around the midtown and upper west sides.}\label{fig:illus_nyc_taxi_additional}
\end{figure*}
\begin{figure*}
\captionsetup[subfigure]{justification=centering}
	\begin{subfigure}[b]{0.245\textwidth}
		\includegraphics[width=\textwidth]{./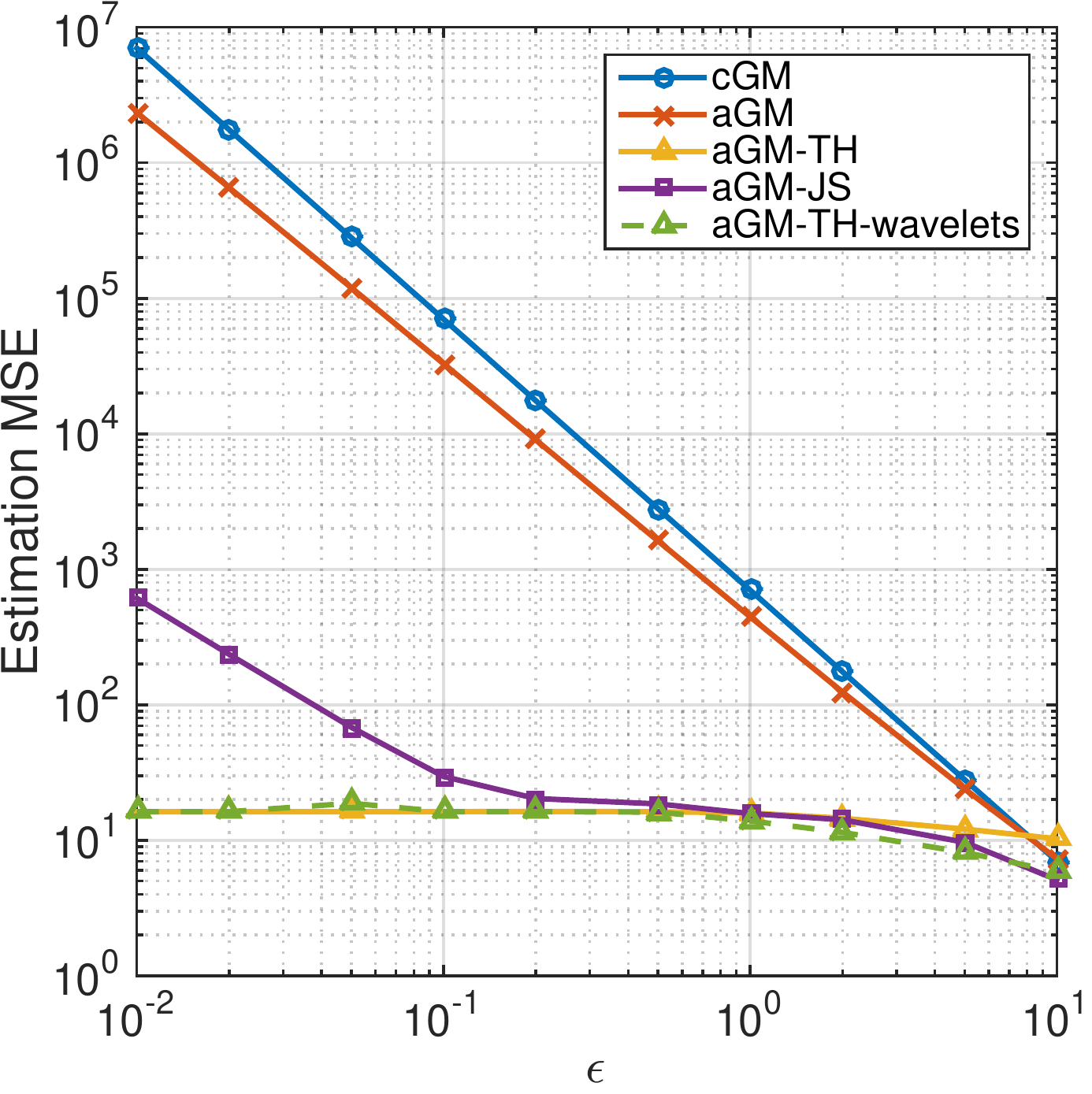}
		\caption*{8:00 am}
	\end{subfigure}
	\begin{subfigure}[b]{0.245\textwidth}
		\includegraphics[width=\textwidth]{./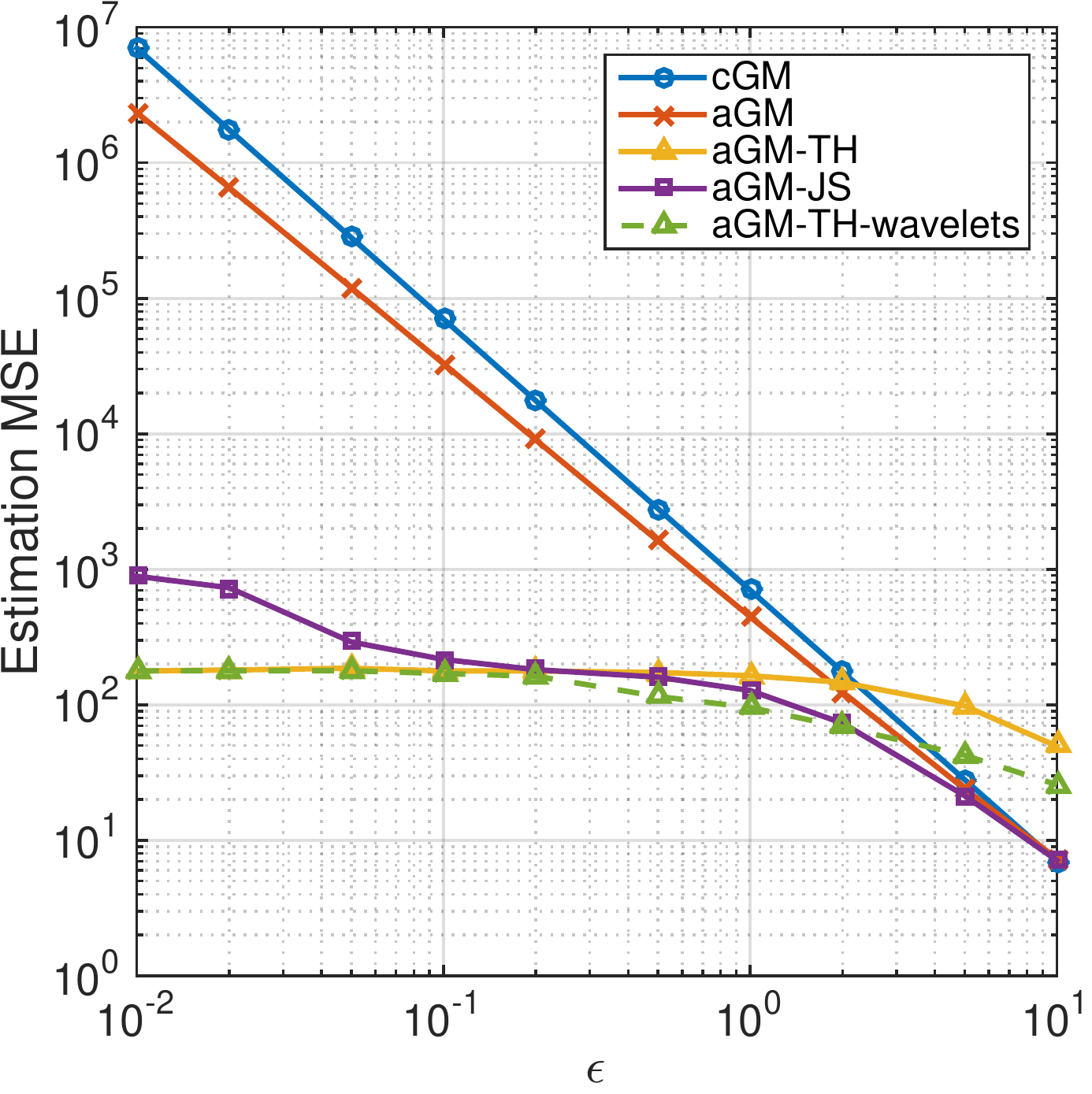}
		\caption*{12:00 am}
	\end{subfigure}
	\begin{subfigure}[b]{0.245\textwidth}
		\includegraphics[width=\textwidth]{./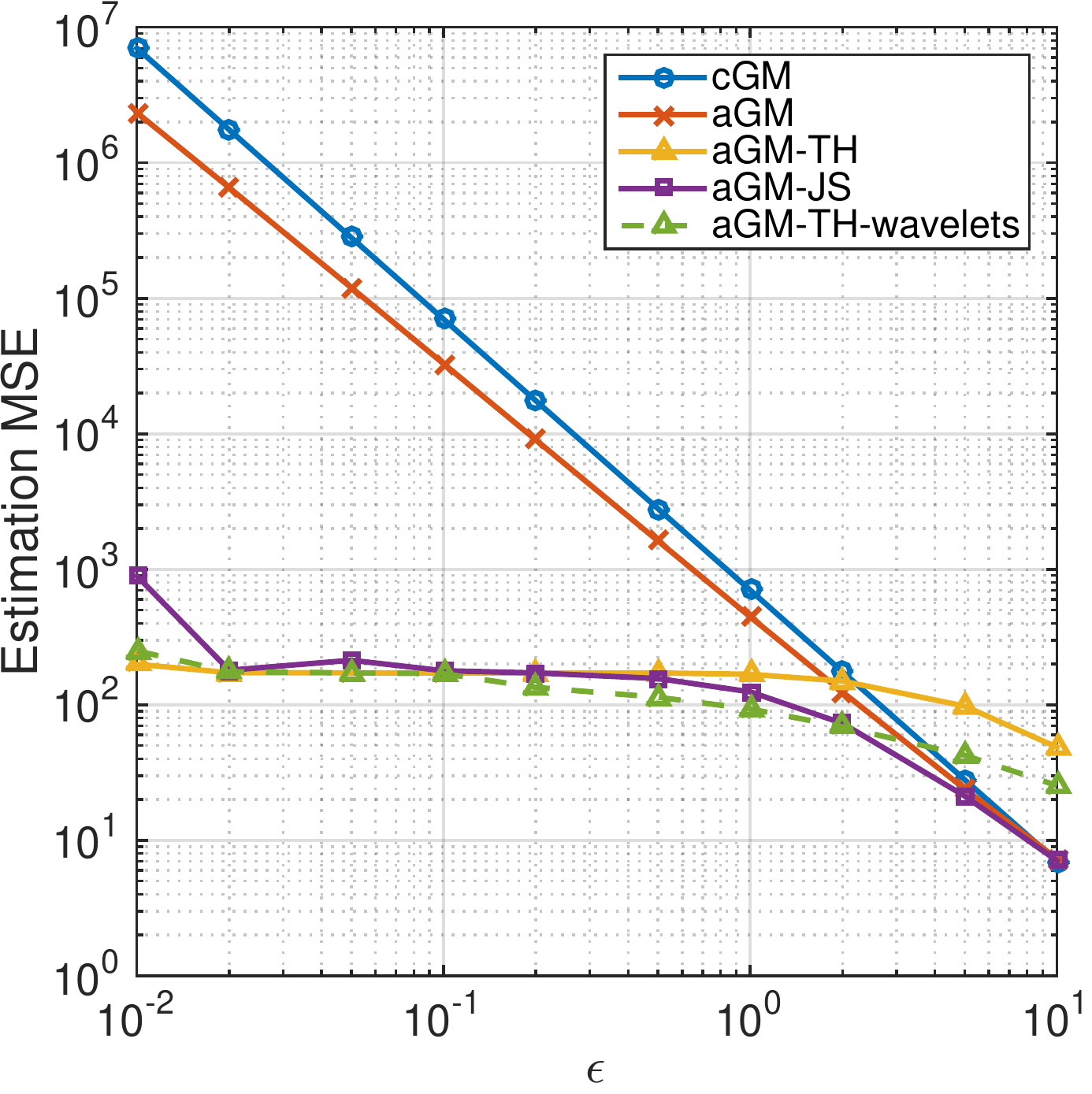}
		\caption*{16:00 am}
	\end{subfigure}
	\begin{subfigure}[b]{0.245\textwidth}
		\includegraphics[width=\textwidth]{./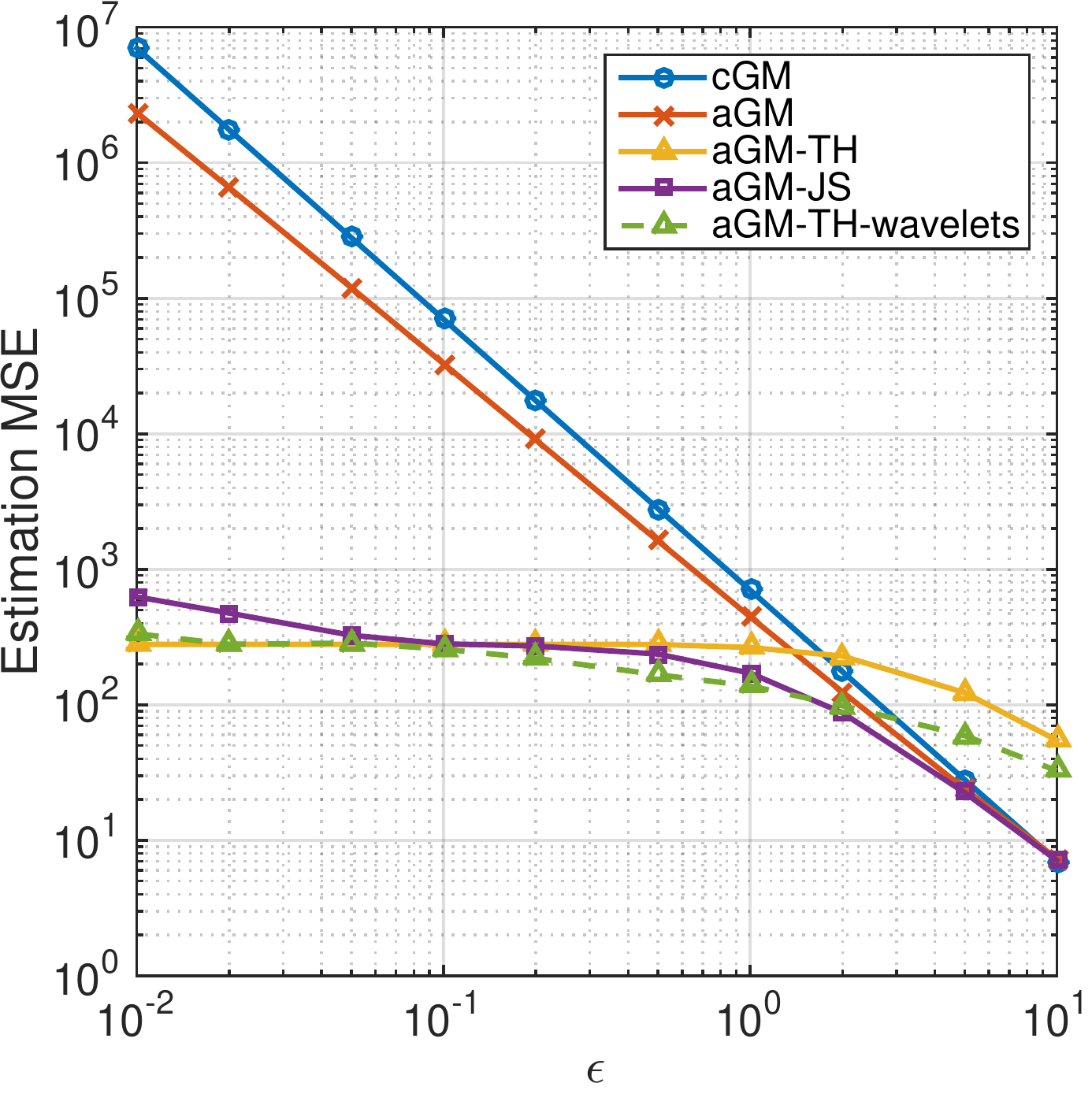}
		\caption*{20:00 am}
	\end{subfigure}
	\caption{Experiments for releasing NYC taxi heat maps. The plots compare the MSE of the reeased heat map as a function of the privacy loss parameter $\varepsilon$. We take $\Delta = 5$ and $\delta = 10^{-6}$ for all experiments. The wavelet basis is generated using \citet{sharpnack2013detecting} and the soft-thresholding's hyperparameter is chosen as $\sigma\sqrt{2\log d}$. }\label{fig:mse_nyc_taxi}	
\end{figure*}

Here we present additional experimental results. Figure~\ref{fig:exp-synth-extra} provides more plots for the setups explored in Sections~\ref{sec:expaGM} and~\ref{sec:expmean}. The next two sections present further experiments on a sparse histogram denoising task and on the New York City taxi dataset.

\subsection{Denoising for Histogram Release}\label{sec:exphist}
We evaluate the accuracy of our new Gaussian perturbation mechanisms on a second task involving private histogram release. In this problem the dataset $x = (x_1, \ldots, x_n)$ contains elements $x_i \in [d]$ from a finite set with $d$. The deterministic functionality is the empirical histogram $y = f(x) \in \R^{d}$ where $y_j = (1/n) \sum_{i=1}^n \mathbb{I}[x_i = j]$. In this case the global $L_2$ sensitivity $\Delta_2 = \sqrt{2}/n$ and a global $L_1$ sensitivity $\Delta_1 = 2/n$ (with respect to replacing one individual in a dataset by another arbitrary individual).

For this task, each dataset is sampled from a multinomial distribution with parameters sampled from a symmetric Dirichlet with $\alpha = 1/d$. The parameters are resampled for each individual experiment. The choice of $\alpha$ guarantees that the resulting histograms are highly sparse \cite{DBLP:journals/corr/abs-1301-4917}. Our setup follows the same structure as the one for the experiments from previous section. The results are presented in Figure~\ref{fig:exp-hist}. We observe that in this problem the Laplace mechanism is better than the classical Gaussian mechanism, and in the setting $\varepsilon = 1$ it is even better than the analytic Gaussian mechanism, with and without denoising. However, as we decrease $\varepsilon$ the utility of the analytic Gaussian mechanism becomes better than that of the Laplace mechanism, and denoising provides a significant advantage over mechanisms without denoising. Finally, we note that due to the sparsity of the underlying datapoint, denoising via soft thresholding provides better utility in this case than denoising via shrinking.

\subsection{New York City Taxi Heat Maps}

Here we present a second qualitative experiment with the New York City taxi dataset. The difference with the previous experiment is that we use data for a different time of the same day, leading to a different structure in the activities around the city; see Figure~\ref{fig:illus_nyc_taxi}. This illustrates that the selected denoising methods are adaptive to the structure of the underlying data.

Furthermore, Figure~\ref{fig:mse_nyc_taxi} presents quantitative results where we compare the mean square error (MSE) of cGM, aGM as well as the aforementioned denoising techniques. As we can see, on the real datasets, aGM always improves over cGM by a constant factor and denoising techniques are able to leverage bias-variance  trade-off and improve the recovery in MSE further. The benefits of denoising range from orders of magnitude (in the case when $\varepsilon$ is tiny) to a small constant factor (when $\varepsilon$ is moderate). In the low-privacy regime (e.g., $\varepsilon >5$), soft-thresholding performs a little worse than not using it at all. This is the expected cost of adaptivity and it does appear in its error bound.

\end{document}